\definecolor{Gray}{gray}{0.9}
\newtheorem{lemma}{Lemma}  
\newtheorem{lemmaa}{Lemma}  
\newtheorem{theorem}{Theorem}
\newtheorem{theoremm}{Theorem}
\newtheorem{proposition}{Proposition}
\newenvironment{proof}{{\noindent\it Proof.}\quad}{\hfill $\square$\par}
\def\*#1{\mathbf{#1}}
\newcommand{\ra}[1]{\renewcommand{\arraystretch}{#1}}
\newcommand{\indep}{\perp \!\!\! \perp}
\title{On the Impact of Spurious Correlation for Out-of-distribution Detection}
\author{%
  Yifei Ming\\
  Department of Computer Sciences\\
  University of Wisconsin-Madison\\
  \texttt{ming5@wisc.edu} \\
    \And 
  Hang Yin\\
  Department of Computer Sciences\\
  University of Wisconsin-Madison\\
  \texttt{hyin56@wisc.edu} \\
  \And
  Yixuan Li \\
  Department of Computer Sciences\\
  University of Wisconsin-Madison\\
  \texttt{sharonli@cs.wisc.edu} \\
  }
\date{}
\begin{document}

\maketitle
\begin{abstract}

Modern neural networks can assign high confidence to inputs drawn from outside the training distribution, posing threats to models in real-world deployments. While much research attention has been placed on designing new out-of-distribution (OOD) detection methods, the precise definition of OOD is often left in vagueness and falls short of the desired notion of OOD in reality. In this paper, we present a new formalization and model the data shifts by taking into account both the invariant and environmental (spurious) features. Under such formalization, we systematically investigate how spurious correlation in the training set impacts OOD detection. Our results suggest that the detection performance is severely worsened when the correlation between spurious features and labels is increased in the training set.
We further show insights on detection methods that are more effective in reducing the impact of spurious correlation, and provide theoretical analysis on why reliance on environmental features leads to high OOD detection error.  Our work aims to facilitate better understandings of OOD samples and their formalization, as well as the exploration of methods that enhance OOD detection\footnote{Our code is publicly available at \url{https://github.com/deeplearning-wisc/Spurious_OOD}}.




\end{abstract}

\section{Introduction}
\label{intro}

Modern deep neural networks have achieved unprecedented success in known contexts for which they are trained, yet they do not necessarily know what they don’t know~\citep{nguyen2015deep}. 
In particular, neural networks have been shown to produce  high posterior probability for test inputs from out-of-distribution (OOD), which should not be predicted by the model. 
This gives rise to the importance of OOD detection, which aims to identify and handle unknown OOD inputs so that the algorithm can take safety precautions. 

Before we attempt any solution, an important yet often overlooked problem is: {what do we mean by out-of-distribution data}? While the research community lacks a consensus on the precise definition, a common evaluation protocol views data with non-overlapping semantics as OOD inputs~\citep{MSP}. 
For example, an image of a \texttt{cow} can be viewed as an OOD \emph{w.r.t} a model tasked to classify \texttt{cat} vs. \texttt{dog}. However, such an evaluation scheme is often oversimplified and may not capture the nuances and complexity of the problem in reality.


 We begin with a motivating example where a neural network can rely on statistically informative yet \emph{spurious} features in the data. Indeed, many prior works showed that modern neural networks can spuriously rely on the biased features (e.g., background or textures) instead of features of the object to achieve high accuracy~\citep{beery2018recognition, geirhos2018imagenettrained, sagawa2019distributionally}. In Figure~\ref{fig:teaser}, we illustrate a model that exploits the spurious correlation between the \texttt{water background} and label \texttt{waterbird} for prediction. 
 Consequently, a model that relies on spurious features can produce a high-confidence prediction for an OOD input with the same background (\emph{i.e.}, water) but a different semantic label (\emph{e.g.}, boat). This can manifest in downstream OOD detection, yet unexplored in prior works.
 
In this paper, we systematically investigate how spurious correlation in the training set impacts OOD detection. 
We first provide a new formalization and explicitly model the data shifts by taking into account both \textbf{invariant} features and \textbf{environmental} features (Section~\ref{prelim}). Invariant features can be viewed as essential cues directly related to semantic labels, whereas environmental features are non-invariant and can be spurious. Our formalization encapsulates two types of OOD data: 
(1) \emph{spurious OOD}---test samples that contain environmental (non-invariant) features but no invariant features; 
(2) \emph{non-spurious OOD}---inputs that contain neither the environmental nor invariant features, which is more in line with the conventional notion of OOD. We provide an illustration of both types of OOD in Figure~\ref{fig:teaser}. 

\begin{figure*}[t]
  \centering
    \includegraphics[width=0.9\linewidth]{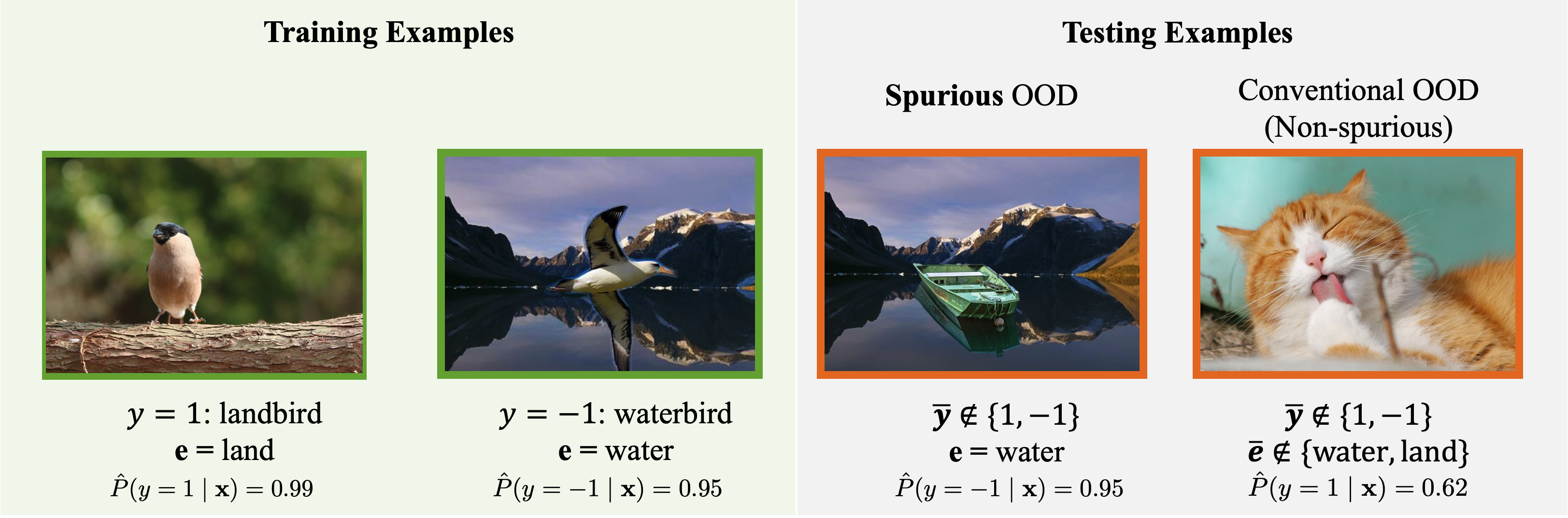}
\caption{\small \textbf{Left} (train): The training examples $\*x$ are generated by a combination of invariant features, dependent on the label $y$; and environmental features, dependent on the environment $e$. In Waterbirds dataset~\citep{sagawa2019distributionally}, $y\in \{\texttt{waterbird}, \texttt{landbird}\}$ is correlated with the environment $e \in \{\text{water}, \text{land}\}$. \textbf{Right} (test): During test time, we consider two types of OOD inputs. Spurious OOD inputs contain the environmental features, but no signals related to the in-distribution classes. Non-spurious OOD inputs have neither environmental features nor invariant features. Confidence scores are computed from a ResNet-18 model trained on Waterbirds~\citep{sagawa2019distributionally}. }
\vspace{-0.3cm}
\label{fig:teaser}
\end{figure*}

Under the new formalization, we conduct extensive experiments and investigate the detection performance under both spurious and non-spurious OOD inputs (Section~\ref{sec:erm}). Our results suggest that spurious correlation in the training data poses a significant challenge to OOD detection. For both spurious and non-spurious OOD samples, the detection performance is severely worsened when the correlation between spurious features and labels is increased in the training set. Further, 
we comprehensively evaluate common OOD detection approaches, and show that feature-based methods have a competitive edge in improving non-spurious OOD detection, while detecting spurious OOD remains challenging (Section~\ref{sec:ood_score}).
To further understand this, we provide theoretical insights on why reliance on non-invariant features leads to high OOD detection error (Section~\ref{sec:theory}). We provably show the existence of spurious OOD inputs with arbitrarily high confidence, which can fail to be distinguished from the ID data. 
Our \textbf{key contributions} are as follows:
\begin{itemize}
    \item We provide a new formalization of OOD detection by explicitly taking into account the separation between invariant features and environmental features. Our formalization encapsulates both spurious and non-spurious OOD. Our work, therefore, provides a complementary perspective in the evaluation of OOD detection. 
     \item We provide systematic investigations on how the extent of spurious correlation in the training set impacts OOD detection. We further show insights on OOD detection solutions that are more effective in mitigating the impact of spurious correlation, with up to 46.73\% reduction of FPR95 in detecting non-spurious OOD data. 
    \item We provide theoretical analysis, provably showing that detecting spurious OOD samples remains challenging due to the model’s reliance on the environmental features.
 
\end{itemize}
Our study provides strong implications for future research on out-of-distribution detection. Our study signifies the importance for future works to evaluate OOD detection algorithms on spurious OOD examples besides standard benchmarks (most of which are non-spurious) to test the limits of the approaches.  We hope that our work will inspire future research on the formalization of the OOD detection problem and algorithmic solutions.

\section{A New Formalization of Out-of-distribution Data}
\label{prelim}
\textbf{Data Model.} We consider a supervised multi-class classification problem, where $\mathcal{X}=\mathbb{R}^d$ denotes the input space and $\mathcal{Y}=\{1,2,...,K\}$ denotes the label space. We assume that the data is drawn from a set of $E$ environments (domains) $\mathcal{E} =\left\{e_{1}, e_{2}, \ldots, e_{E}\right\}$. The inputs $\*x$ is generated by a combination of invariant features $\*z_\text{inv} \in \mathbb{R}^s$, dependent on the label $y$; and environmental feature $\*z_e\in \mathbb{R}^{d_e}$, dependent on the environment $e$:
$$\*x = \tau(\*z_\text{inv}, \*z_e),$$
where 
$\tau$ is a function transformation from the latent features $[\*z_\text{inv},\*z_e]^\top$ to the pixel-space $\mathcal{X}$. The signal $\*z_\text{inv}$ are the cues essential for the recognition of $\*x$ as $y$; examples include the color, the shape of beaks and claws, and fur patterns of birds for classifying  \texttt{waterbird} vs. \texttt{landbird}. Environmental features $\*z_e$, on the other hand, are cues not essential for the recognition but correlated with the target $y$. For example, many waterbird images are taken in water habitat, so water scenes can be considered as $\*z_e$. Under the data model, we have a joint distribution $P(\*x,y, e)$. Each $g=(y,e) \in \mathcal{Y}\times \mathcal{E}$ group has its own distribution over features $[\*z_\text{inv}, \*z_e] \in \mathbb{R}^{s+d_e}$. Furthermore, let $\mathcal{D}_\text{in}^e$ denote the marginal distribution on $\mathcal{X}$ for environment $e$. The union of distributions $\mathcal{D}_\text{in}^e$ over all environments is the in-distribution $\mathcal{D}_\text{in}$. 
\paragraph{Out-of-distribution Data.} In practice, OOD refers to samples from an irrelevant distribution whose label set has no intersection with $\mathcal{Y}$, and therefore should not be predicted by the model. Under our data model, we define data distributional shifts by explicitly taking into account the separation between invariant features and environmental features. Concretely, our formalization encapsulates two types of OOD data defined below. 
\begin{table*}[t]
    \ra{1.2}
    \centering
    \resizebox{0.8\textwidth}{!}{
    \begin{tabular}{cccccccc}\toprule
          & & \multicolumn{2}{c}{$\textbf{r=0.5}$} & \multicolumn{2}{c}{$\textbf{r=0.7}$}
          & \multicolumn{2}{c}{$\textbf{r=0.9}$}
          \\\cmidrule(lr){3-4}\cmidrule(lr){5-6}\cmidrule(lr){7-8} \textbf{OOD Type} & \textbf{Test Set}&
              \textbf{FPR95} $\downarrow$     &      \textbf{AUROC} $\uparrow$   &    \textbf{FPR95}  $\downarrow$     &      \textbf{AUROC}$\uparrow$    &
                  \textbf{FPR95} $\downarrow$      &      \textbf{AUROC} $\uparrow$ \\  \midrule
    \textbf{Spurious OOD} &  & $59.89\pm12.40$ & $88.54\pm4.81$  & $74.22\pm13.12$  & $80.98\pm4.45$ & 
            $74.39\pm12.50$ & $79.81\pm8.43$ \\
            \midrule
            \multirow{4}*{    \textbf{Non-spurious OOD}}
   &  iSUN  & $19.69\pm10.66$ & $91.88\pm4.52$ & $43.22\pm12.50$ & $91.81\pm3.32$ & $57.40\pm15.54$ & $82.45\pm7.98$ \\
    &  LSUN  & $22.60\pm12.08$ & $90.80\pm3.33$ & $43.30\pm16.66$ & $90.09\pm4.51$ & $52.68\pm13.70$ & $84.56\pm8.56$ \\
    &  SVHN  & $15.32\pm5.05$ & $95.71\pm2.20$ & $25.53\pm8.11$ & $95.60\pm2.45$ & $43.89\pm23.80$ & $93.27\pm6.90$ \\
    \bottomrule
    \end{tabular}
    }
    \caption{\small OOD detection performance of  models trained on \textbf{Waterbirds}~\citep{sagawa2019distributionally}. Increased spurious correlation in the training set results in worsen performance for both non-spurious and spurious OOD samples. In particular, spurious OOD is more challenging than non-spurious OOD samples.  Results (mean and std) are estimated over 4 runs for each setting.}
    \label{tab:erm_ablation_waterbird}
     \vspace{-0.3cm}
\end{table*}

\begin{itemize}
\item \textbf{Spurious OOD} is a particularly challenging type of inputs, which contain the \emph{environmental feature, but no invariant feature essential for the label}. Formally, we denote by $\*x = \tau(\*z_{\bar Y}, \*z_e)$, where $\*z_{\bar Y}$ is from an out-of-class label $\bar Y \notin \mathcal{Y}$. For example, this can be seen in Figure~\ref{fig:teaser} (middle right), where the OOD example contains the semantic feature \texttt{boat} $\notin \{\texttt{waterbird}, \texttt{landbird}\}$, yet it has the environmental feature of water background. 
\item \textbf{Non-spurious (conventional) OOD} are inputs that contain \emph{neither the environmental nor the invariant features}, i.e., $\*x = \tau(\*z_{\bar Y}, \*z_{\bar e})$. In particular, $\*z_{\bar Y}$ is sampled from an out-of-class label $\bar Y \notin \mathcal{Y}$, and $\*z_{\bar e}$ is sampled from a different environment $\bar e \notin \mathcal{E}$. For example, an input of an \texttt{indoor cat} falls into this category, where both the semantic label \texttt{cat} and environment \texttt{indoor} are distinct from the in-distribution data of waterbirds and landbirds. 
\end{itemize}

\paragraph{Out-of-distribution Detection.}
OOD detection can be viewed as a binary classification problem. 
Let $f:\mathcal{X} \rightarrow \mathbb{R}^K$ be a neural network trained on samples drawn from the data distribution defined above. 
During inference time, OOD detection can be performed by exercising a thresholding mechanism:
\begin{align}
\label{eq:threshold}
	G_{\lambda}(\*x; f)=\begin{cases} 
      \text{in} & S(\*x;f)\ge \lambda \\
      \text{out} & S(\*x;f) < \lambda 
   \end{cases},
\end{align}
where samples with higher scores $S(\*x;f)$ are classified as ID and vice versa. The threshold $\lambda$ is typically chosen so that a high fraction of ID data (\emph{e.g.,} 95\%) is correctly classified.  
\section{How does spurious correlation impact OOD detection?}
\label{sec:erm}

During training, a classifier may learn to rely on the association between environmental features and labels to make its predictions. Moreover, we hypothesize that such a reliance on environmental features can cause failures in the downstream OOD detection. To verify this, we begin with the most common training objective empirical risk minimization (ERM). 
Given a loss function $\ell$, ERM finds the model $w$ that minimizes the average training loss:
\begin{align}
    \mathcal{\hat R}(w) = \mathbb{E}_{(\*x,y,e)\sim\hat P} [\ell (w; (\*x,y,e))].
\end{align}

We now describe the datasets we use for model training and OOD detection tasks. We consider three tasks that are commonly used in the literature. We start with a natural image dataset Waterbirds, and then move onto the CelebA dataset~\citep{liu2015faceattributes}. Due to space constraints, a third evaluation task on ColorMNIST is in the Supplementary.

\paragraph{Evaluation Task 1: Waterbirds.} Introduced in \citep{sagawa2019distributionally}, this dataset is used to explore the spurious correlation between the image background and bird types, specifically $\mathcal{E} \in \{\texttt{water}, \texttt{land}\}$ and $\mathcal{Y} \in \{\texttt{waterbirds}, \texttt{landbirds}\}$. We also control the correlation between $y$ and $e$ during training as $ r \in \{0.5, 0.7, 0.9\}$. The correlation $r$ is defined as $r = P(e = \texttt{water} \mid y = \texttt{waterbirds} ) = P(e = \texttt{land} \mid y = \texttt{landbirds})$. For spurious OOD, we adopt a subset of images of land and water from the Places dataset \citep{zhou2017places}. For non-spurious OOD, we follow the common practice and use the  \texttt{SVHN}~\citep{svhn}, \texttt{LSUN}~\citep{lsun}, and \texttt{iSUN}~\citep{xu2015turkergaze} datasets. 

\paragraph{Evaluation Task 2: CelebA.} In order to further validate our findings beyond background spurious (environmental) features, we also evaluate on the CelebA~\citep{liu2015faceattributes} dataset.  The classifier is trained to differentiate the hair color (grey vs. non-grey) with $\mathcal{Y} = \{\texttt{grey hair}, \texttt{nongrey hair}\}$. The environments $\mathcal{E} = \{\texttt{male}, \texttt{female}\}$ denote the gender of the person. In the training set, ``Grey hair'' is highly correlated with ``Male'', where $82.9\%$ ($r\approx0.8$) images with grey hair are male. Spurious OOD inputs consist of \texttt{bald male}, which contain environmental features (gender) without invariant features (hair). The non-spurious OOD test suite is the same as above (\texttt{SVHN}, \texttt{LSUN}, and \texttt{iSUN}). Figure~\ref{fig:celeba} illustates ID samples, spurious and non-spurious OOD test sets. We also subsample the dataset to ablate the effect of $r$; see results are in the Supplementary.


\begin{figure*}[t]
  \centering
  \vspace{-0.3cm}
    \includegraphics[width=0.85\linewidth]{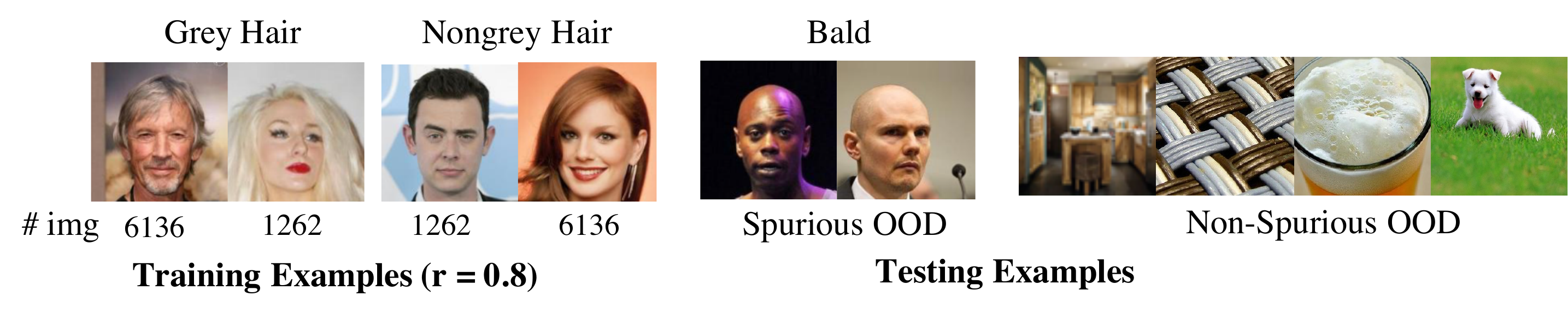}
      \vspace{-10pt}
\caption{\small For CelebA, the classifier is trained to differentiate the hair color (grey vs. non-grey). \textbf{Left}: Training environments.  $82.9\%$ images with grey hair are male, whereas $82.9\%$ images with non-grey hair are female. \textbf{Middle}: Spurious OOD inputs contain the environmental feature (male) without invariant features (hair). \textbf{Right}: Non-spurious OOD samples consist of images with diverse semantics without human faces.}
\label{fig:celeba}
 \vspace{-0.3cm}
\end{figure*}
\paragraph{Results and Insights.} We train on ResNet-18~\citep{he2016deep} for both tasks. See Appendix for details on hyperparameters and in-distribution performance. We summarize the OOD detection performance in Table~\ref{tab:erm_ablation_waterbird} (Waterbirds), Table~\ref{tab:erm_celeba} (CelebA) and Table~\ref{tab:erm_ablation_cmnist} (ColorMNIST). 

There are several salient observations. 
\textbf{First}, for both spurious and non-spurious OOD samples, the detection performance is severely worsened when the correlation between spurious features and labels is increased in the training set. Take the Waterbirds task as an example, under correlation $r=0.5$, the average false positive rate (FPR95) for spurious OOD samples is $59.89\%$, and increases to $74.39$\% when $r=0.9$. Similar trends also hold for other datasets. \textbf{Second}, spurious OOD is much more challenging to be detected compared to non-spurious OOD. From Table\ref{tab:erm_ablation_waterbird}, under correlation $r =0.7$, the average FPR95 is $37.35\%$ for non-spurious OOD, and increases to $74.22\%$ for spurious OOD. Similar observations hold under different correlation and different training datasets. \textbf{Third}, for non-spurious OOD, samples that are more semantically dissimilar to ID are easier to detect. Take Waterbirds as an example, images containing scenes (e.g. LSUN and iSUN) are more similar to the training samples compared to images of numbers (e.g. SVHN), resulting in higher FPR95 (e.g. $43.22\%$ for iSUN compared to $25.53\%$ for SVHN under $r=0.7$). 

\begin{table}[h]
    \ra{1.2}
    \centering
    \resizebox{0.45\textwidth}{!}{
    \begin{tabular}{cccc}\toprule
          \textbf{OOD Type} & \textbf{Test Set}&
              \textbf{FPR95} $\downarrow$     &      \textbf{AUROC} $\uparrow$  \\  \midrule
    \textbf{Spurious OOD} & & $71.28\pm4.12$ & $82.04\pm2.64$  \\
            \midrule
            \multirow{4}*{\textbf{Non-spurious OOD}}
   &  iSUN  & $17.35\pm2.97$ &  $97.03\pm0.30$  \\
    &  LSUN  & $18.85\pm2.44$ & $96.90\pm0.17$   \\
    &  SVHN  & $5.63\pm2.60$ &  $98.64\pm0.21$  \\
    \bottomrule
    \end{tabular}
    }
    \caption{\small OOD detection performance of  models trained on \textbf{CelebA}~\citep{liu2015faceattributes} with $r\approx0.8$. Spurious OOD test data incurs much higher FPR than non-spurious OOD data. Results (mean and std) are estimated over 4 runs for each setting.}
        \vspace{-0.4cm}
    \label{tab:erm_celeba}
\end{table}

Our results suggest that spurious correlation poses a significant threat to the model. In particular, a model can produce high-confidence predictions on the spurious OOD, due to the reliance on the environmental feature (\emph{e.g.}, background information) rather than the invariant feature (\emph{e.g.}, bird species). To verify that the spurious feature causes poor detection performance, we show that the classifier frequently predicts the spurious OOD as the ID class with the same environmental feature. For Waterbirds, on average $93.9\%$ of OOD samples with water background is classified as waterbirds, and $80.7\%$ of OOD samples with land background is classified as land birds. For the CelebA dataset, on average $86.5\%$ of spurious OOD samples (bold male) are classified as grey hair. 
Note that our results here are based on the energy score~\citep{liu2020energy}, which is one competitive detection method derived from the model output (logits) and has shown superior OOD detection performance over directly using the predictive confidence score. Next, we provide an expansive evaluation using a broader suite of OOD scoring functions in  Section~\ref{sec:ood_score}.

\begin{table*}[ht]
    \ra{1.2}
    \centering
    \resizebox{\textwidth}{!}{
    \begin{tabular}{lcccccccccccccccccccc}\toprule
    \textbf{Scoring Func}
        & \multicolumn{4}{c}{MSP \citep{MSP}} & \multicolumn{4}{c}{ODIN \citep{liang2018enhancing}}
          & \multicolumn{4}{c}{Mahalanobis \citep{Maha}} & \multicolumn{4}{c}{Energy \citep{liu2020energy}} & \multicolumn{4}{c}{Gram \citep{gram}}
          \\\cmidrule(lr){2-5}\cmidrule(lr){6-9}\cmidrule(lr){10-13}\cmidrule(lr){14-17} \cmidrule(lr){18-21}
    \textbf{Metric}
        & \multicolumn{2}{c}{\textbf{FPR95}$\downarrow$} &\multicolumn{2}{c}{\textbf{AUROC}$\uparrow$}
        & \multicolumn{2}{c}{\textbf{FPR95}$\downarrow$} &\multicolumn{2}{c}{\textbf{AUROC}$\uparrow$}
        & \multicolumn{2}{c}{\textbf{FPR95}$\downarrow$} &\multicolumn{2}{c}{\textbf{AUROC}$\uparrow$}
        & \multicolumn{2}{c}{\textbf{FPR95}$\downarrow$} &\multicolumn{2}{c}{\textbf{AUROC}$\uparrow$}
        & \multicolumn{2}{c}{\textbf{FPR95}$\downarrow$} &\multicolumn{2}{c}{\textbf{AUROC}$\uparrow$}\\
        \cmidrule(lr){2-3}\cmidrule(lr){4-5}\cmidrule(lr){6-7}\cmidrule(lr){8-9}\cmidrule(lr){10-11}\cmidrule(lr){12-13}\cmidrule(lr){14-15}\cmidrule(lr){16-17} \cmidrule(lr){18-19} \cmidrule(lr){20-21}
    \textbf{In-distribution Data}
        & \textbf{SP} & \textbf{NSP} & \textbf{SP} & \textbf{NSP} & \textbf{SP} & \textbf{NSP} & \textbf{SP} & \textbf{NSP} & \textbf{SP} & \textbf{NSP} & \textbf{SP} & \textbf{NSP} & \textbf{SP} & \textbf{NSP} & \textbf{SP} & \textbf{NSP}& \textbf{SP} & \textbf{NSP}& \textbf{SP} & \textbf{NSP}\\ 
        \midrule
    \textbf{ColorMNIST} & \cellcolor{gray!25} 42.99 & \textcolor{red}{3.15}  &\cellcolor{gray!25} 77.75&99.13 &\cellcolor{gray!25}38.06 &1.88 &\cellcolor{gray!25} 78.78 &99.01  & \cellcolor{gray!25}14.97& \textcolor{blue}{0.04} &\cellcolor{gray!25}88.65 & 99.54& \cellcolor{gray!25}30.45&7.65  &\cellcolor{gray!25}86.74&97.54 &\cellcolor{gray!25} 4.33 &0.05 & \cellcolor{gray!25}96.89 &99.40  \\
        \textbf{Waterbirds } &74.68 \cellcolor{gray!25} &\textcolor{red}{47.53}  &\cellcolor{gray!25} 79.22&92.34 &\cellcolor{gray!25} 77.25 & 34.06 &\cellcolor{gray!25} 81.04 & 93.48 & \cellcolor{gray!25} 69.35&  \textcolor{blue}{0.80} &\cellcolor{gray!25} 82.73& 99.51 &\cellcolor{gray!25}74.22 &37.35 &\cellcolor{gray!25}80.98&92.50    &\cellcolor{gray!25} 58.25 & 0.65 &\cellcolor{gray!25} 87.33& 99.71\\
         \textbf{CelebA} & \cellcolor{gray!25}83.70 &\textcolor{red}{22.60}  &\cellcolor{gray!25} 68.22& 90.21 &\cellcolor{gray!25}81.07 &11.49 &\cellcolor{gray!25} 75.22 &  89.11 & \cellcolor{gray!25}78.75 &  \textcolor{blue}{2.33} &\cellcolor{gray!25}83.12& 98.93&\cellcolor{gray!25}71.28 &13.94 &\cellcolor{gray!25}82.04& 97.51 &\cellcolor{gray!25} 81.21& 3.11 &\cellcolor{gray!25} 68.58 &98.96 \\
        
            
    \bottomrule
    \end{tabular}
    }
    \caption{\small Performance for different post-hoc OOD detection methods when the spurious correlation is high in the training set. We choose $r=0.45$ for ColorMNIST, $r=0.7$ for Waterbirds, and $r=0.8$ for CelebA. SP stands for Spurious OOD test set. NSP denotes non-spurious OOD, where the results are averaged over 3 OOD test sets (see details in Section~\ref{sec:erm}).}
    \label{tab:diff_ood_score}
    \end{table*}

\section{How to reduce the impact of spurious correlation for OOD detection?}
\label{sec:ood_score}
The results in the previous section naturally prompt the question: how can we better detect spurious and non-spurious OOD inputs when the training dataset contains spurious correlation?  In this section, we comprehensively evaluate common OOD detection approaches, 
and show that feature-based methods have a competitive edge in improving non-spurious OOD detection, while detecting spurious OOD remains challenging (which we further explain theoretically in Section~\ref{sec:theory}). 

\paragraph{Feature-based vs. Output-based OOD Detection.}  Section~\ref{sec:erm} suggests that OOD detection becomes challenging for output-based methods especially when the training set contains high spurious correlation.
However, the efficacy of using representation space for OOD detection remains unknown.  In this section, we consider a suite of common scoring functions including maximum softmax probability (MSP)~\citep{MSP}, ODIN score~\citep{liang2018enhancing, GODIN}, Mahalanobis distance-based score~\citep{Maha}, energy score~\citep{liu2020energy}, and Gram matrix-based score~\citep{gram}---all of which can be derived \emph{post hoc}\footnote{Note that Generalized-ODIN requires modifying the training objective and model retraining. For fairness, we primarily consider strict post-hoc methods based on the standard cross-entropy loss.} from a trained model. Among those, Mahalanobis and Gram Matrices can be viewed as feature-based methods. For example, \citet{Maha} estimates class-conditional Gaussian distributions in the representation space and then uses the maximum Mahalanobis distance as the OOD scoring function. Data points that are sufficiently far away from all the class centroids are more likely to be OOD.  

\paragraph{Results.} The performance comparison is shown in Table~\ref{tab:diff_ood_score}. Several interesting observations can be drawn. \textbf{First}, we can observe a significant performance gap between \emph{spurious OOD} (SP) and \emph{non-spurious OOD} (NSP), irrespective of the OOD scoring function in use. This observation is in line with our findings in Section~\ref{sec:erm}. 
\textbf{Second}, the OOD detection performance is generally improved with the feature-based scoring functions such as Mahalanobis distance score~\citep{Maha} and Gram Matrix score~\citep{gram}, compared to scoring functions based on the output space (\emph{e.g.}, MSP, ODIN, and energy). The improvement is substantial for non-spurious OOD data. For example, on Waterbirds, FPR95 is reduced by 46.73\% with Mahalanobis score compared to using MSP score. For spurious OOD data, the performance improvement is most pronounced using the Mahalanobis score. Noticeably, using the Mahalanobis score, the FPR95 is reduced by 28.02\% on the ColorMNIST dataset, compared to using the MSP score. 
Our results suggest that feature space preserves useful information that can more effectively distinguish between ID and OOD data.
\begin{figure*}[htb]
  \begin{subfigure}[m]{0.68\linewidth}
    \includegraphics[width=\linewidth]{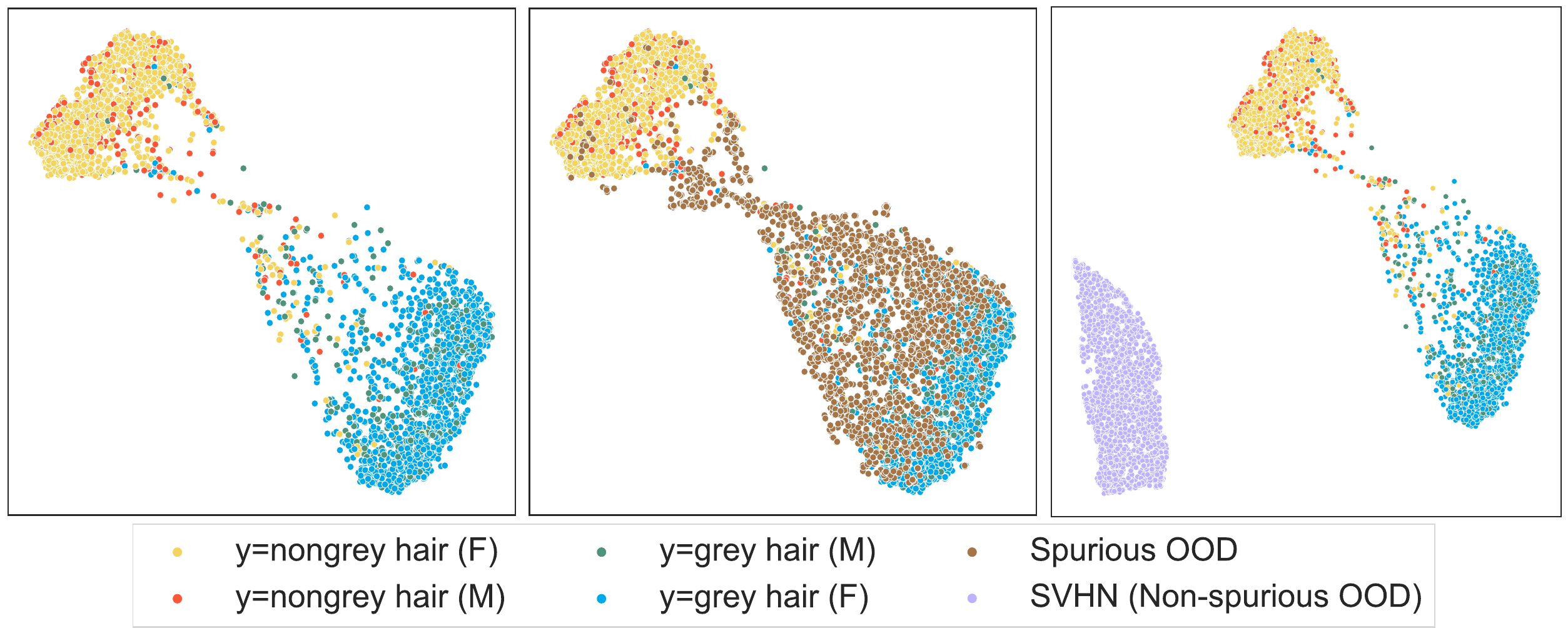}
     \caption{Visualization of Feature Embedding}
     \label{fig:umap_feature}
  \end{subfigure}
    \begin{subfigure}[m]{0.31\linewidth}
    \includegraphics[width=\linewidth]{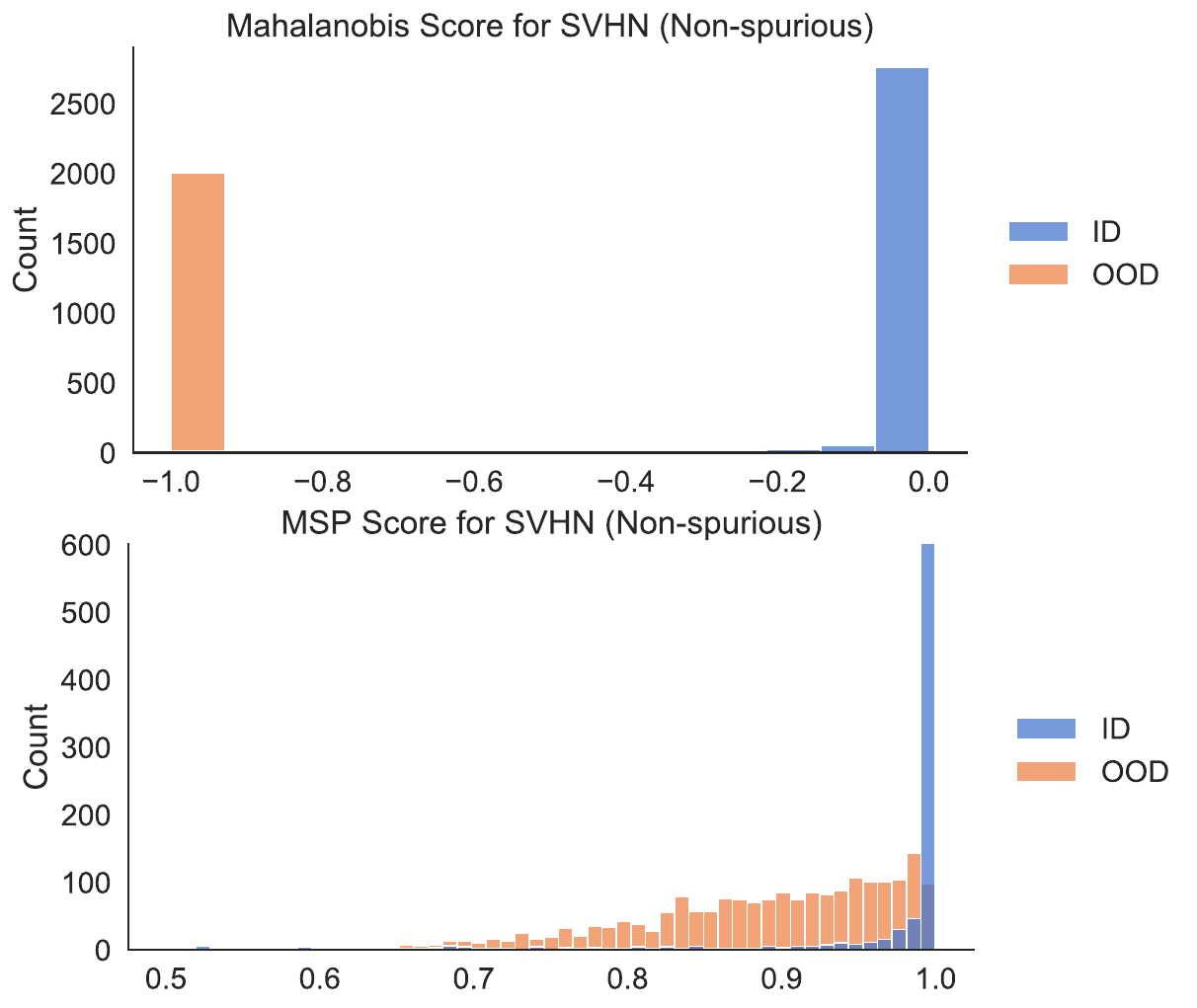}
    \caption{Histograms of Different Scores}
    \label{fig:histogram}
  \end{subfigure}
\caption{ \small (a) \textbf{Left}: Feature for in-distribution data only. (a) \textbf{Middle}: Feature for both ID and spurious OOD data. (a) \textbf{Right}: Feature for ID and non-spurious OOD data (SVHN). M and F in parentheses stand for male and female respectively.  (b) Histogram of Mahalanobis score and MSP score for ID and SVHN (Non-spurious OOD). Full results for other non-spurious OOD datasets (iSUN and LSUN) are in the Supplementary.} 
\vspace{-0.3cm}
\end{figure*}

\paragraph{Analysis and Visualizations.} To provide further insights on why the feature-based method is more desirable, we show the visualization of embeddings in Figure~\ref{fig:umap_feature}. The visualization is based on the CelebA task. 
From Figure~\ref{fig:umap_feature} (left), we observe a clear separation between the two class labels. Within each class label, data points from both environments are well mixed (e.g., see the green and blue dots). In Figure~\ref{fig:umap_feature} (middle), we visualize the embedding of ID data together with spurious OOD inputs, which contain the environmental feature (\texttt{male}). Spurious OOD (bold male) lies between the two ID clusters, with some portion overlapping with the ID samples, signifying
the hardness of this type of OOD.
This is in stark contrast with non-spurious OOD inputs shown in Figure~\ref{fig:umap_feature} (right), where a clear separation between ID and OOD (purple) can be observed. This shows that feature space contains useful information that can be leveraged for OOD detection, especially for conventional non-spurious OOD inputs. Moreover, by comparing the histogram of Mahalanobis distance (top) and MSP score (bottom) in Figure~\ref{fig:histogram}, we can further verify that ID and OOD data is much more separable with the Mahalanobis distance. Therefore, our results suggest that feature-based methods show promise for improving non-spurious OOD detection when the training set contains spurious correlation, while there still exists large room for improvement on spurious OOD detection.




\section{Why is it hard to detect spurious OOD?}
\label{sec:theory}
Given the results above, a natural question arises: why is it hard to detect spurious OOD inputs? To better understand this issue, we now provide theoretical insights. In what follows, we first model the ID and OOD data distributions and then derive mathematically the model output of invariant classifier, where the model aims not to rely on the environmental features for prediction. 


\paragraph{Setup.} We consider a binary classification task where $y \in\{-1, 1\}$, and is drawn according to a fixed probability $\eta:= P(y=1)$.
We assume both the invariant features $\*z_\text{inv}$ and environmental features $\*z_e$ are drawn from Gaussian distributions: $$\*z_\text{inv} \sim \mathcal{N}\left(y \cdot \bm\mu_\text{inv}, \sigma_\text{inv}^{2} I\right), \quad \*z_e \sim \mathcal{N}\left(y \cdot \bm\mu_{e}, \sigma_{e}^{2} I\right)$$ 
where $\bm\mu_{e} \in \mathbb{R}^{d_e}$, $\bm\mu_\text{inv} \in \mathbb{R}^{s}$, and $I$ is the identity matrix. Note that the parameters $\bm\mu_\text{inv}$ and $\sigma^2_\text{inv}$ are the same for all environments. In contrast, the environmental parameters $\bm\mu_\text{e}$ and $\sigma^2_e$ are different across $e$, where the subscript is used to indicate the dependence on the environment and the index of the environment. In what follows, we present the results, with detailed proof deferred in the Appendix. 

\begin{lemma} 
\label{lem:bayes}
(Bayes optimal classifier) For any feature vector which is a linear combination of the invariant and environmental features $\Phi_e(\*x) = M_\text{inv}\*z_\text{inv} + M_e\*z_e$, the optimal linear classifier for an environment $e$ has the corresponding coefficient $ 2\Sigma_\Phi^{-1}\bm\mu_{\Phi}$, where:
\begin{align*}
    \bm\mu_{\Phi}&  = M_\text{inv} \bm\mu_{\text{inv}}+M_e \bm\mu_{e}\\
    \Sigma_{\Phi}& = M_\text{inv} M_\text{inv}^{T} \sigma_{\text{inv}}^{2}+M_e M_e^{T} \sigma_{e}^{2}
\end{align*}
\label{lemma:bayes}
\vspace{-15pt}
\end{lemma}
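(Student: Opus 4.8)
The plan is to derive the Bayes optimal classifier for a two-class Gaussian discrimination problem in the feature space $\Phi_e(\*x)$, using the standard observation that when both classes share the same covariance, the optimal decision rule is linear. First I would note that under the stated assumptions, $\*z_\text{inv} \mid y \sim \mathcal{N}(y\bm\mu_\text{inv}, \sigma_\text{inv}^2 I)$ and $\*z_e \mid y \sim \mathcal{N}(y\bm\mu_e, \sigma_e^2 I)$, and (implicitly) $\*z_\text{inv}$ and $\*z_e$ are conditionally independent given $y$. Therefore the linear image $\Phi_e(\*x) = M_\text{inv}\*z_\text{inv} + M_e\*z_e$ is, conditionally on $y$, a Gaussian vector: its conditional mean is $y(M_\text{inv}\bm\mu_\text{inv} + M_e\bm\mu_e) = y\bm\mu_\Phi$, and its conditional covariance — being a sum of two independent transformed Gaussians — is $M_\text{inv}M_\text{inv}^T\sigma_\text{inv}^2 + M_e M_e^T\sigma_e^2 = \Sigma_\Phi$, which crucially does not depend on $y$.

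Next I would write down the posterior log-odds. For a binary label with prior $\eta = P(y=1)$, the log-likelihood ratio for a Gaussian with common covariance $\Sigma_\Phi$ and means $\pm\bm\mu_\Phi$ is
\begin{align*}
\log \frac{P(y=1 \mid \Phi)}{P(y=-1 \mid \Phi)}
 &= \log\frac{\eta}{1-\eta} - \tfrac12 (\Phi - \bm\mu_\Phi)^T \Sigma_\Phi^{-1}(\Phi - \bm\mu_\Phi) + \tfrac12 (\Phi + \bm\mu_\Phi)^T \Sigma_\Phi^{-1}(\Phi + \bm\mu_\Phi).
\end{align*}
Expanding the two quadratic forms, the purely quadratic terms $\Phi^T\Sigma_\Phi^{-1}\Phi$ and the $\bm\mu_\Phi^T\Sigma_\Phi^{-1}\bm\mu_\Phi$ terms cancel, leaving $2\bm\mu_\Phi^T\Sigma_\Phi^{-1}\Phi$ plus the prior term. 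Hence the log-odds is an affine function of $\Phi$ whose linear coefficient is $2\Sigma_\Phi^{-1}\bm\mu_\Phi$ (using symmetry of $\Sigma_\Phi^{-1}$), matching the claimed coefficient; the bias term is $\log(\eta/(1-\eta))$. The Bayes optimal classifier thresholds this quantity at zero, so its separating hyperplane has exactly this linear coefficient, which is what the lemma asserts.

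The only genuine subtlety — and the step I would be most careful about — is the justification that $\Sigma_\Phi$ is invertible and that the conditional covariance of $\Phi$ is indeed the claimed sum; this relies on $\*z_\text{inv} \perp \*z_e \mid y$, which the setup states only implicitly through writing separate distributions, so I would make that independence assumption explicit. A secondary point is bookkeeping about the factor of $2$: it appears because the two class means are $+\bm\mu_\Phi$ and $-\bm\mu_\Phi$ (a separation of $2\bm\mu_\Phi$) rather than $\bm\mu_\Phi$ and $0$; I would keep the signs straight throughout the quadratic-form expansion. Everything else is routine Gaussian algebra, so I would present the mean/covariance computation and the log-odds expansion compactly and defer no real difficulty beyond these two observations.
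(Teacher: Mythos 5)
Your proposal is correct and follows essentially the same route as the paper's proof: both establish that $\Phi_e(\*x)\mid y$ is Gaussian with mean $y\cdot\bm\mu_\Phi$ and shared covariance $\Sigma_\Phi$, then apply Bayes' rule so that the posterior log-odds are affine in $\Phi$ with linear coefficient $2\Sigma_\Phi^{-1}\bm\mu_\Phi$ and bias $\log\eta/(1-\eta)$. Your explicit remarks on the conditional independence of $\*z_\text{inv}$ and $\*z_e$ and on the invertibility of $\Sigma_\Phi$ are reasonable housekeeping that the paper leaves implicit, but they do not change the argument.
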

Note that the Bayes optimal classifier uses environmental features which are informative of the label but non-invariant. Rather, we hope to rely \emph{only} on invariant features while ignoring environmental features. Such a predictor is also referred to as \emph{optimal invariant predictor}~\citep{rosenfeld2020risks}, which is specified in the following. Note that this is a special case of Lemma~\ref{lemma:bayes} with $M_\text{inv} = I$ and $M_e = {0}$.
\begin{proposition} 
\label{prop:invariant}
(Optimal invariant classifier using invariant features) Assume the featurizer recovers the invariant feature $\Phi_e(\*x) = [\*z_\text{inv}] \; \forall e \in \mathcal{E}$, the optimal invariant classifier has the corresponding coefficient $2\bm\mu_\text{inv}/\sigma^2_\text{inv}$.\footnote{The constant term in the classifier weights is $\log \eta / (1-\eta)$, which we omit here and in the sequel.}


\end{proposition}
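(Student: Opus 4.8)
The plan is to derive the optimal invariant classifier as a direct corollary of Lemma~\ref{lemma:bayes}, by instantiating the featurizer that discards the environmental features. Concretely, I would set $M_\text{inv} = I$ (the $s\times s$ identity) and $M_e = 0$, so that $\Phi_e(\*x) = \*z_\text{inv}$ for every $e \in \mathcal{E}$. Plugging these into the formulas of Lemma~\ref{lemma:bayes} gives $\bm\mu_\Phi = I\cdot\bm\mu_\text{inv} + 0\cdot\bm\mu_e = \bm\mu_\text{inv}$ and $\Sigma_\Phi = I I^T \sigma_\text{inv}^2 + 0\cdot\sigma_e^2 = \sigma_\text{inv}^2 I$. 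Hence the optimal linear classifier coefficient is $2\Sigma_\Phi^{-1}\bm\mu_\Phi = 2(\sigma_\text{inv}^2 I)^{-1}\bm\mu_\text{inv} = 2\bm\mu_\text{inv}/\sigma_\text{inv}^2$, which is exactly the claimed coefficient.

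The one thing worth spelling out is why this particular choice of featurizer yields the \emph{invariant} predictor and why it is well-defined across environments. Since $\*z_\text{inv} \sim \mathcal{N}(y\cdot\bm\mu_\text{inv}, \sigma_\text{inv}^2 I)$ with $\bm\mu_\text{inv}$ and $\sigma_\text{inv}^2$ identical in every environment, the conditional law of $\Phi_e(\*x) = \*z_\text{inv}$ given $y$ does not depend on $e$. Therefore the Bayes rule computed within any single environment $e$ coincides with the Bayes rule one would compute in any other environment — the classifier is environment-invariant in the sense of \citet{rosenfeld2020risks}. I would state this observation briefly to justify calling the resulting rule the \emph{optimal invariant classifier}, then note that optimality among linear rules on $\Phi_e(\*x)$ is inherited verbatim from Lemma~\ref{lemma:bayes}.

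I would also remark, consistently with the footnote, that the bias/constant term equals $\log\frac{\eta}{1-\eta}$ and is being suppressed; this comes from the log-odds of the class prior in the Gaussian discriminant computation underlying Lemma~\ref{lemma:bayes}, and nothing about it changes when $M_e = 0$.

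Since the whole argument is a substitution into an already-proved lemma, there is no real obstacle. The only minor care needed is dimensional bookkeeping — ensuring $M_\text{inv} = I_{s}$ has the right shape so that $M_\text{inv}M_\text{inv}^T = I_s$ and $\Sigma_\Phi$ is the $s\times s$ matrix $\sigma_\text{inv}^2 I_s$, hence invertible — and making explicit that the hypothesis ``$\Phi_e(\*x) = [\*z_\text{inv}]\ \forall e$'' is precisely the statement that this featurizer is used uniformly across environments, so that Lemma~\ref{lemma:bayes} applies unchanged in each environment and returns the same coefficient everywhere.
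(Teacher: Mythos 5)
Your proposal is correct and follows exactly the paper's own route: the paper treats this proposition as the special case of Lemma~\ref{lemma:bayes} with $M_\text{inv} = I$ and $M_e = 0$, which is precisely your substitution yielding $\bm\mu_\Phi = \bm\mu_\text{inv}$, $\Sigma_\Phi = \sigma_\text{inv}^2 I$, and coefficient $2\bm\mu_\text{inv}/\sigma_\text{inv}^2$ with constant term $\log \eta/(1-\eta)$. Your added remark on why the rule is environment-invariant (the invariant parameters are shared across $e$) is a fine, if unneeded, elaboration of the same argument.
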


The optimal invariant classifier explicitly ignores the environmental features. However, an invariant classifier learned does not necessarily depend only on the invariant features. Next Lemma shows that \emph{it can be possible to learn an invariant classifier that relies on the environmental features while achieving lower risk than the optimal invariant classifier}.
\begin{lemma} (Invariant classifier using non-invariant features) 
\label{lem:inv}Suppose $E \leq d_{e}$, given a set of environments $\mathcal{E}=\left\{e_{1}, e_{2}, \ldots, e_{E}\right\}$ such that all environmental means are linearly independent. Then there always exists a unit-norm vector $\*p$ and positive fixed scalar $\beta$ such that $\beta = \*p^{T} \bm\mu_{e}/ \sigma_{e}^{2}$ $\forall e \in \mathcal{E}$. The resulting optimal classifier weights are 
$$\hat w = \left[ \begin{array}{c} \beta_{\text{inv}} \\ 2\beta \end{array}\right] = \left[ \begin{array}{c} 2\bm\mu_{\text{inv}} / \sigma_\text{inv}^{2}\\ 2\*p^\top\bm\mu_e / \sigma_e^2 \end{array}\right].$$
\label{short_cut}
\vspace{-9pt}
\end{lemma}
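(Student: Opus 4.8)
The plan is to construct the promised vector $\*p$ explicitly via linear algebra, and then to invoke Lemma~\ref{lemma:bayes} to read off the optimal classifier weights. First I would set up the feature map that keeps both the invariant and environmental blocks, i.e. $\Phi_e(\*x) = [\*z_\text{inv}, \*z_e]^\top$, which corresponds in the notation of Lemma~\ref{lemma:bayes} to $M_\text{inv}$ and $M_e$ being (rectangular) identity embeddings into the first $s$ and last $d_e$ coordinates respectively. Under this choice $\bm\mu_\Phi = [\bm\mu_\text{inv}, \bm\mu_{e}]^\top$ and $\Sigma_\Phi$ is block-diagonal with blocks $\sigma_\text{inv}^2 I_s$ and $\sigma_e^2 I_{d_e}$, so the Bayes-optimal coefficient $2\Sigma_\Phi^{-1}\bm\mu_\Phi$ splits as $[\,2\bm\mu_\text{inv}/\sigma_\text{inv}^2,\; 2\bm\mu_{e}/\sigma_e^2\,]^\top$. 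The subtlety is that this coefficient depends on $e$ through the environmental block, so it is not invariant; the point of the lemma is to collapse the environmental block to a single scalar direction that is shared across environments.

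The core step is the existence of the unit vector $\*p$ and scalar $\beta$ with $\*p^\top \bm\mu_{e}/\sigma_e^2 = \beta$ for all $e \in \mathcal{E}$. I would phrase this as a linear system: stacking the $E$ environmental means as rows of a matrix $U \in \mathbb{R}^{E \times d_e}$, we want $q \in \mathbb{R}^{d_e}$ solving $U q = (\sigma_{e_1}^2, \dots, \sigma_{e_E}^2)^\top$ — wait, more precisely we want $\bm\mu_{e_i}^\top q = \sigma_{e_i}^2 \beta$ for a common $\beta$; taking $\beta$ free, it suffices to solve $\bm\mu_{e_i}^\top q = \sigma_{e_i}^2$ for each $i$. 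Since $E \le d_e$ and the environmental means are linearly independent, $U$ has full row rank $E$, so this system is consistent (the right-hand side lies in $\mathbb{R}^E = \operatorname{col}(U)$ trivially); pick any solution $q$, which is nonzero because the right-hand side is nonzero, and set $\*p = q/\|q\|$ and $\beta = 1/\|q\|$. One should check $\beta > 0$, which holds since $\|q\| > 0$; I would also remark that among solutions one can take $q$ in the row space of $U$ to make it canonical, though any solution works for the statement.

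With $\*p$ and $\beta$ in hand, I would apply Lemma~\ref{lemma:bayes} with the environmental embedding $M_e = \*p\,\*p^\top$ projected appropriately — more cleanly, take the feature map $\Phi_e(\*x) = [\*z_\text{inv},\; \*p^\top \*z_e]^\top \in \mathbb{R}^{s+1}$, so that $M_\text{inv} = I_s$ and $M_e = \*p^\top$ (a $1 \times d_e$ block). Then $\bm\mu_\Phi = [\bm\mu_\text{inv},\; \*p^\top\bm\mu_{e}]^\top$ and $\Sigma_\Phi = \operatorname{diag}(\sigma_\text{inv}^2 I_s,\; \|\*p\|^2\sigma_e^2) = \operatorname{diag}(\sigma_\text{inv}^2 I_s,\; \sigma_e^2)$ since $\*p$ is unit-norm. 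Crucially, the scalar environmental component of $\bm\mu_\Phi$ equals $\*p^\top\bm\mu_{e} = \beta\sigma_e^2$ and the corresponding variance is $\sigma_e^2$, so the Bayes-optimal coefficient $2\Sigma_\Phi^{-1}\bm\mu_\Phi$ has environmental entry $2\beta\sigma_e^2/\sigma_e^2 = 2\beta$, independent of $e$ — hence the classifier is invariant across environments. The invariant block gives $2\bm\mu_\text{inv}/\sigma_\text{inv}^2$ as before. Stacking these yields exactly $\hat w = [\,2\bm\mu_\text{inv}/\sigma_\text{inv}^2,\; 2\*p^\top\bm\mu_e/\sigma_e^2\,]^\top = [\,\beta_\text{inv},\; 2\beta\,]^\top$, since $\*p^\top\bm\mu_e/\sigma_e^2 = \beta$.

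The main obstacle — really the only nontrivial point — is justifying that this constructed classifier is the \emph{optimal} invariant one (not merely \emph{an} invariant one), and more importantly that it can attain strictly lower risk than the optimal invariant predictor of Proposition~\ref{prop:invariant}. For the former I would argue that any invariant linear classifier must have a common environmental coefficient vector $v$ with $v^\top\bm\mu_e/\sigma_e^2$ and $v^\top v \,\sigma_e^2$ both constant in $e$ (otherwise the induced logits differ across environments for the same features), and subject to that constraint the per-environment Bayes risk is minimized by the plug-in Gaussian-discriminant weights, which is what Lemma~\ref{lemma:bayes} provides; so the construction is optimal within the invariant class whenever such a $v$ exists, and the existence of $\*p$ shows it does. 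For the risk-reduction claim, I would note that the optimal invariant predictor of Proposition~\ref{prop:invariant} is the special case $v = 0$, and since the true conditional distribution genuinely depends on $\*z_e$ (as $\bm\mu_e \neq 0$), adding the $2\beta$ term along direction $\*p$ strictly increases the effective signal-to-noise separation, hence strictly decreases the 0/1 risk; making this quantitative via the standard Gaussian-discriminant error formula $\Phi(-\tfrac12\|\Sigma_\Phi^{-1/2}\bm\mu_\Phi\|)$-type expression is routine and can be relegated to the appendix.
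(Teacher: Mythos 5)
Your proposal is correct and follows essentially the same route as the paper's proof: you construct $\*p$ by solving the stacked linear system $\bm\mu_{e_i}^\top q=\sigma_{e_i}^2$ (feasible because the means are linearly independent and $E\le d_e$), normalize to get $\*p$ and $\beta=1/\|q\|$, and then plug the featurizer $\Phi_e(\*x)=[\*z_\text{inv},\,\*p^\top\*z_e]^\top$ into Lemma~\ref{lemma:bayes} to read off the weights $[\,2\bm\mu_\text{inv}/\sigma_\text{inv}^2,\;2\beta\,]^\top$, exactly as the paper does (which takes the minimum-norm solution $A^\top(AA^\top)^{-1}\*b$). Your final paragraph's concern about optimality within the invariant class and strict risk reduction goes beyond what the lemma asserts (the risk comparison is handled separately in Lemma~\ref{lemma:pure}), so it is extra rather than a gap.
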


Note that the optimal classifier weight $2\beta$ is a constant, which does not depend on the environment (and neither does the optimal coefficient for $\*z_\text{inv}$). The projection vector $\*p$ acts as a "short-cut" that the learner can use to yield an insidious surrogate signal $\*p^\top\*z_{e}$. Similar to $\*z_\text{inv}$, this insidious signal can also lead to an invariant predictor (across environments) admissible by invariant learning methods. In other words, despite the varying data distribution across environments, the optimal classifier (using non-invariant features) is the same for each environment. We now show our main results, where OOD detection can fail under such an invariant classifier. 

\begin{theorem}\label{lem:thm} (Failure of OOD detection under invariant classifier) Consider an out-of-distribution input which contains the environmental feature: $\Phi_{\text{out}}(\*x) = M_{\text{inv}}\*z_{\text{out}} + M_e\*z_e$, where  $\*z_{\text{out}} \perp \bm\mu_{\text{inv}}$. Given the invariant classifier (cf. Lemma 2), the posterior probability for the OOD input is $p(y = 1 \mid \Phi_{\text{out}}) = \sigma\left(2\*p^\top\*z_e\beta + \log \eta / (1-\eta) \right)$, where $\sigma$ is the logistic function. Thus for arbitrary confidence $0<c := P(y = 1 \mid \Phi_{\text{out}})<1$, there exists $\Phi_{\text{out}}(\*x)$ with $\*z_e$ such that $\*p^\top\*z_e = \frac{1}{2\beta} \log \frac{c(1 -\eta)}{\eta(1-c)}$. 

\end{theorem}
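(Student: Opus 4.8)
The plan is to compute the posterior $p(y=1 \mid \Phi_{\text{out}})$ directly from the invariant classifier weights given in Lemma~\ref{lem:inv}, and then invert the logistic link to produce the desired $\*z_e$. First I would recall that a linear classifier with weight vector $\hat w$ acting on the feature $\Phi(\*x)$ yields a posterior of the form $p(y=1 \mid \Phi) = \sigma\!\left(\hat w^\top \Phi + \log \tfrac{\eta}{1-\eta}\right)$; this follows from the Gaussian modeling assumption (equal covariances across the two classes, means $\pm\bm\mu$) and is exactly the structure exposed in Lemmas~\ref{lem:bayes}--\ref{lem:inv}, where the constant term $\log\eta/(1-\eta)$ was suppressed by the footnote convention. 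So the only work is to evaluate $\hat w^\top \Phi_{\text{out}}$.

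Second, I would substitute. Write $\hat w = [\,\beta_{\text{inv}};\ 2\beta\,]^\top$ with $\beta_{\text{inv}} = 2\bm\mu_{\text{inv}}/\sigma_{\text{inv}}^2$, and note that the $\Phi$-representation splits the invariant and environmental coordinates, so $\hat w^\top \Phi_{\text{out}} = \beta_{\text{inv}}^\top (M_{\text{inv}}\*z_{\text{out}}) + 2\beta\,(\*p^\top \*z_e)$ — here I am using that the environmental block of the invariant classifier, after the "short-cut" reduction of Lemma~\ref{lem:inv}, contracts $M_e \*z_e$ to $2\beta\,\*p^\top\*z_e$, consistent with the claimed coefficient. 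The crucial structural input is the hypothesis $\*z_{\text{out}} \perp \bm\mu_{\text{inv}}$: since $\beta_{\text{inv}}$ is (a scalar multiple of) $\bm\mu_{\text{inv}}$ pulled back through $M_{\text{inv}}$, the invariant contribution $\beta_{\text{inv}}^\top M_{\text{inv}} \*z_{\text{out}}$ vanishes — I would make explicit exactly which inner product is being assumed zero and why that kills the term (essentially $\bm\mu_{\text{inv}}^\top M_{\text{inv}}^\top M_{\text{inv}} \*z_{\text{out}}$, or, in the reduced coordinates where $\Phi$ already returns $\*z_{\text{inv}}$, simply $\bm\mu_{\text{inv}}^\top \*z_{\text{out}} = 0$). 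What remains is $\hat w^\top \Phi_{\text{out}} = 2\beta\,\*p^\top\*z_e$, giving $p(y=1\mid\Phi_{\text{out}}) = \sigma\!\left(2\*p^\top\*z_e\,\beta + \log\tfrac{\eta}{1-\eta}\right)$ as claimed.

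Third, for the final assertion I would simply invert $\sigma$. Setting $c = \sigma\!\left(2\beta\,\*p^\top\*z_e + \log\tfrac{\eta}{1-\eta}\right)$ and using $\sigma^{-1}(c) = \log\tfrac{c}{1-c}$, solve for $\*p^\top\*z_e$ to get $\*p^\top\*z_e = \tfrac{1}{2\beta}\left(\log\tfrac{c}{1-c} - \log\tfrac{\eta}{1-\eta}\right) = \tfrac{1}{2\beta}\log\tfrac{c(1-\eta)}{\eta(1-c)}$. Since $\beta>0$ is fixed and $\*p$ is a fixed unit vector, for any target value $t := \tfrac{1}{2\beta}\log\tfrac{c(1-\eta)}{\eta(1-c)} \in \mathbb{R}$ there is clearly a $\*z_e \in \mathbb{R}^{d_e}$ with $\*p^\top\*z_e = t$ (e.g. $\*z_e = t\,\*p$), and one can additionally choose $\*z_{\text{out}}$ in the orthogonal complement of $\bm\mu_{\text{inv}}$ freely, so an admissible $\Phi_{\text{out}}(\*x)$ exists; this establishes the existence claim for every $c \in (0,1)$.

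I expect no genuine obstacle here — the statement is essentially a one-line consequence of the logistic-posterior form plus the orthogonality hypothesis — so the main care points are bookkeeping rather than difficulty: being precise about the $M_{\text{inv}}, M_e$ versus reduced-$\Phi$ conventions so the coefficient $2\beta$ matches Lemma~\ref{lem:inv} exactly, correctly tracking the suppressed bias term $\log\eta/(1-\eta)$ (it must reappear here since the theorem statement writes it out), and noting that $\*z_{\text{out}} \perp \bm\mu_{\text{inv}}$ is what guarantees the OOD input carries \emph{no} invariant signal so that the posterior depends on $\*z_e$ alone. The only thing worth a sentence of justification is why such a configuration is realizable in the data/feature model — i.e. that the featurizer can in fact output $\Phi_{\text{out}}$ of the stated form — but this is already granted by the setup of the theorem.
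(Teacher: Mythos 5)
Your proposal is correct and follows essentially the same route as the paper's proof: the paper re-derives the posterior via Bayes' rule on the factorized class-conditional Gaussian densities of $(\*z_{\text{out}}, \*p^\top\*z_e)$, which after the orthogonality $\bm\mu_{\text{inv}}^\top\*z_{\text{out}}=0$ cancels the invariant term and collapses to exactly your $\sigma\left(\hat w^\top\Phi_{\text{out}} + \log \eta/(1-\eta)\right) = \sigma\left(2\beta\,\*p^\top\*z_e + \log \eta/(1-\eta)\right)$, and then inverts the logistic just as you do. Your direct use of the logistic-linear form from Lemmas 1--2, and the explicit construction $\*z_e = t\,\*p$ for the existence claim, are only presentational shortcuts relative to the paper's computation, not a different argument.
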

Our theorem above signifies the existence of OOD inputs that can trigger high-confidence predictions on in-distribution classes yet contain no meaningful feature related to the labels in $\mathcal{Y}=\{1,-1\}$ at all. An OOD detector can fail to detect these inputs with predictions that are indistinguishable from ID data. We provide a simple toy example to explain this phenomenon further. 

\begin{figure*}[t]
\centering
  \begin{subfigure}[b]{0.37\linewidth}
    \includegraphics[width=\linewidth]{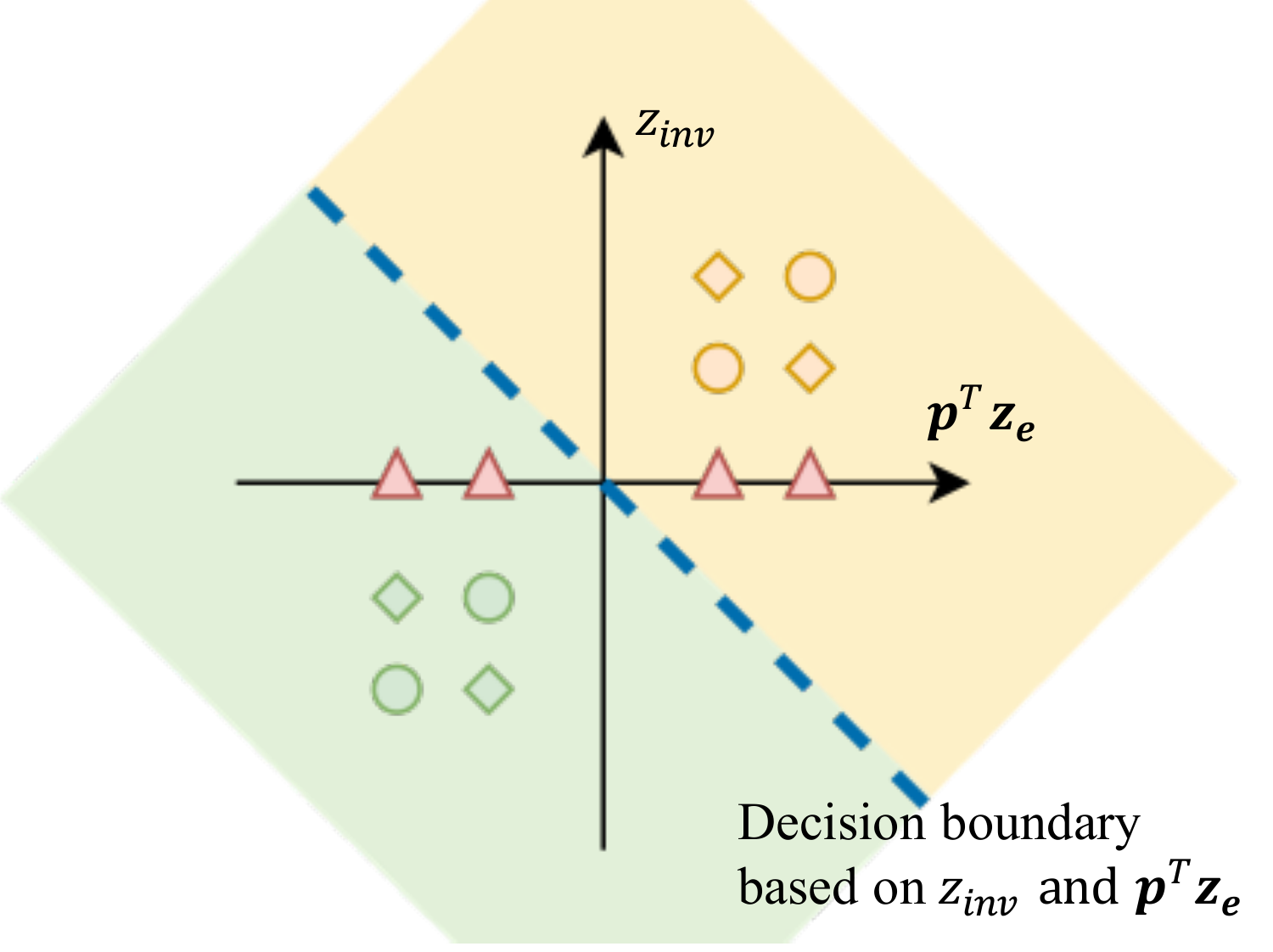}
     \caption{Feature based on $\*z_\text{inv}$ and $\*p^\top\*z_e$ }
     \label{fig:spurious_decision}
  \end{subfigure}
  \hspace{7em}
    \begin{subfigure}[b]{0.37\linewidth}
    \includegraphics[width=\linewidth]{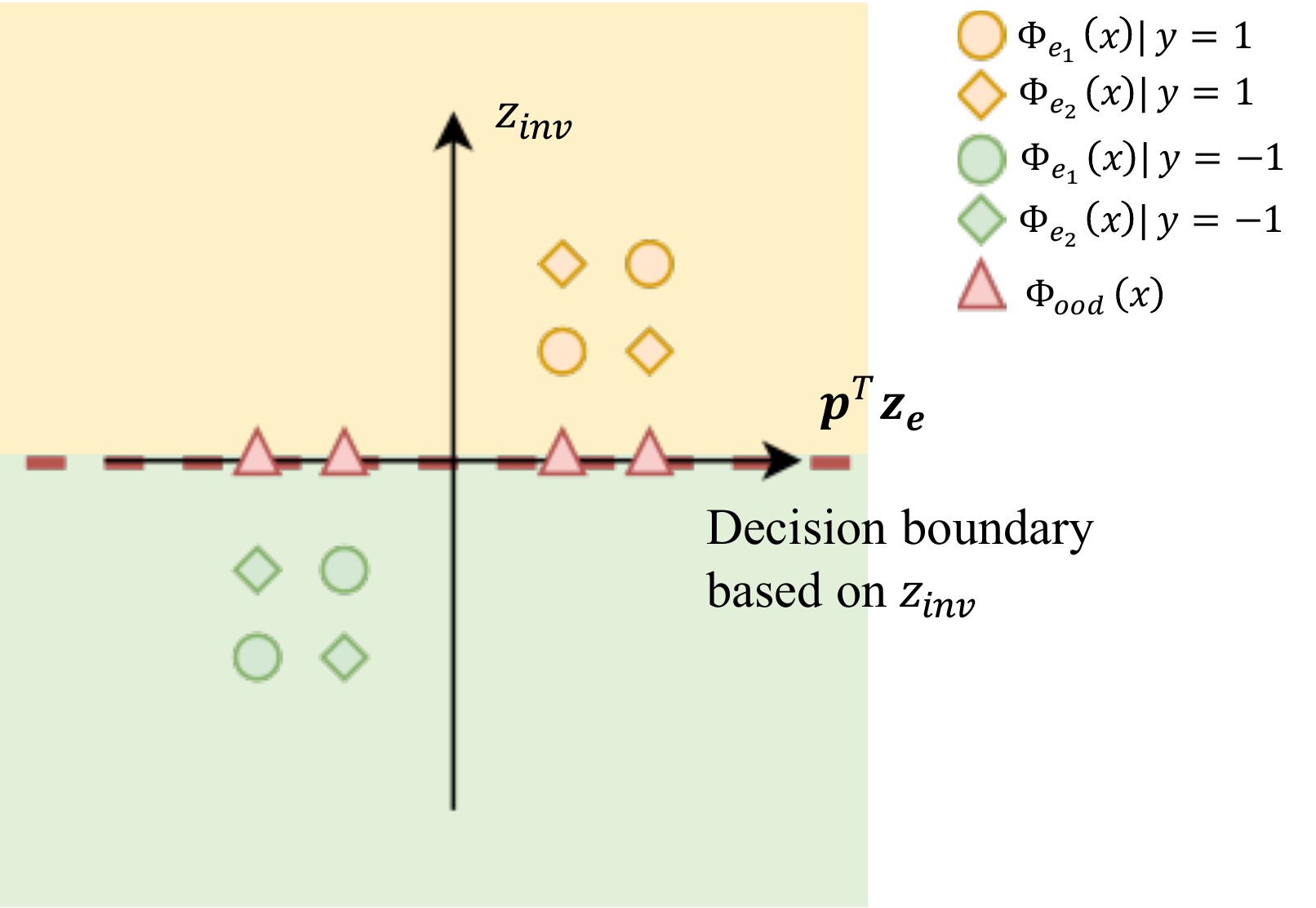}
    \caption{Feature based on $\*z_\text{inv}$ }
    \label{fig:ideal_invariant}
  \end{subfigure}
\caption{ \small The ID data is comprised of two classes $y=1$ (yellow) and $y=-1$ (green). Two environments are shown as circle and diamond, respectively. (a) The invariant decision boundary (blue dashed line) is based on both the invariant feature $z_\text{inv}$ and environmental features $z_e$. Illustration of the existence of OOD inputs (red triangles) that can be predicted as in-distribution with high confidence, therefore can fail to be detected by OOD methods (e.g., using predictive confidence threshold). (b) An ideal case when the invariant decision boundary is purely based on $z_\text{inv}$ (red dashed line). The OOD inputs lie on the decision boundary and will be predicted as $y=1$ or $y=-1$ with a probability 0.5.} 
\vspace{-0.3cm}
\end{figure*}

\paragraph{An Intuitive Example.} An illustrative example with two environments is provided in Figure~\ref{fig:spurious_decision}. The feature representations for examples in environments 1 and 2 are shown as circle and diamond, respectively. In-distribution samples with different colors correspond to different labels: yellow indicates $y=1$ and green indicates $y=-1$. The decision boundary of classification is denoted by the dashed line, which relies on both the invariant features $\*z_\text{inv}$ and environmental features $\*z_e$.  It can be seen that if the feature representation relies on environmental features $\*p^\top\*z_e$, spurious OOD samples (red triangles) can trick the classifier into recognizing OOD samples as one of the in-distribution classes with high confidence, posing severe threats to OOD detection. 

In contrast, under an ideal case when the invariant classifier only uses invariant features $\*z_\text{inv}$, the optimal decision boundary is a horizontal dashed line (see Figure~\ref{fig:ideal_invariant}). OOD inputs (red triangles) will be predicted with a probability of 0.5 since they lie on the decision boundary. 

\paragraph{Remark.} As a special case, if the representation consists purely of environmental features, i.e.,   $\Phi_e(\*x) = \left[\*z_e\right]$, the resulting optimal classifier weights are $2\*p^\top\*\mu_e / \sigma_e^2 = 2\beta$, a fixed scalar that is still invariant across environments. Lemma~\ref{lemma:pure} below shows that such a predictor can yield low risks under certain conditions. Our main theorem above still holds under such a predictor. 
\begin{lemma}
\label{lemma:pure}
(Existence of purely environmental predictors with low risks ~\citep{rosenfeld2020risks}) There exists a representation constructed purely relying on environmental features based on the short-cut direction $\*p$  that achieves lower risk than the optimal invariant predictor on every environment e such that $\sigma_{e} \beta>\sigma_{\text{inv}}^{-1}\left\|\mu_{\text{inv}}\right\|_{2}$ and $2 \sigma_{e} \beta \sigma_{\text{inv}}^{-1}\left\|\mu_{\text{inv}}\right\|_{2} \geq\left|\log \eta / (1-\eta)\right|$.
\end{lemma}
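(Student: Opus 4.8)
The plan is to reduce the comparison to two one-dimensional Gaussian mean-discrimination problems and then exploit monotonicity of the resulting Bayes risk in a single ``separation'' parameter. First I would note that, since $\*z_\text{inv}\sim\mathcal{N}(y\bm\mu_\text{inv},\sigma_\text{inv}^2 I)$, the optimal invariant classifier of Proposition~\ref{prop:invariant} depends on the data only through the scalar $\bm\mu_\text{inv}^\top\*z_\text{inv}/\|\bm\mu_\text{inv}\|_2$, whose class-conditional law is $\mathcal{N}(\pm\|\bm\mu_\text{inv}\|_2,\sigma_\text{inv}^2)$; likewise, the purely environmental predictor along the short-cut direction $\*p$ discussed in the preceding Remark (the feature $\*p^\top\*z_e$ with weight $2\beta$) depends only on $\*p^\top\*z_e$, whose class-conditional law on environment $e$ is $\mathcal{N}(\pm\*p^\top\bm\mu_e,\sigma_e^2)=\mathcal{N}(\pm\beta\sigma_e^2,\sigma_e^2)$, using $\|\*p\|_2=1$ and $\beta=\*p^\top\bm\mu_e/\sigma_e^2>0$ from Lemma~\ref{lem:inv}. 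The quantity that governs both risks is the standardized half-distance between the two class means: $\gamma_\text{inv}:=\|\bm\mu_\text{inv}\|_2/\sigma_\text{inv}$ for the invariant predictor and $\gamma_e:=\beta\sigma_e$ for the environmental one.

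Next I would write the population $0/1$ risk of the Bayes-optimal threshold classifier (the one carrying the bias $\log\tfrac{\eta}{1-\eta}$ that the paper omits) acting on a feature with law $\mathcal{N}(\pm m,s^2)$ purely in terms of $\gamma=m/s$:
\[
R(\gamma)=\eta\,\Phi\!\Big(-\gamma-\tfrac{L}{2\gamma}\Big)+(1-\eta)\,\Phi\!\Big(-\gamma+\tfrac{L}{2\gamma}\Big),\qquad L:=\log\tfrac{\eta}{1-\eta},
\]
where $\Phi$ is the standard normal CDF. The core step is then to show $R$ is strictly decreasing on $(0,\infty)$. Setting $a=\gamma+\tfrac{L}{2\gamma}$, $b=\gamma-\tfrac{L}{2\gamma}$, one has $a^2-b^2=(a+b)(a-b)=2L$, hence $\eta\,\phi(a)=(1-\eta)\,\phi(b)$; differentiating $R$ and using $\tfrac{da}{d\gamma}=1-\tfrac{L}{2\gamma^2}$, $\tfrac{db}{d\gamma}=1+\tfrac{L}{2\gamma^2}$, the two contributions carrying the factor $\tfrac{L}{2\gamma^2}$ cancel by this identity, leaving $R'(\gamma)=-2\eta\,\phi(a)<0$.

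Finally I would plug in: the first hypothesis $\sigma_e\beta>\sigma_\text{inv}^{-1}\|\bm\mu_\text{inv}\|_2$ is exactly $\gamma_e>\gamma_\text{inv}$, so strict monotonicity yields $R(\gamma_e)<R(\gamma_\text{inv})$, i.e.\ the purely environmental predictor attains strictly smaller risk on environment $e$ than the optimal invariant predictor, which is the claim. The second hypothesis $2\sigma_e\beta\,\sigma_\text{inv}^{-1}\|\bm\mu_\text{inv}\|_2\ge|\log\tfrac{\eta}{1-\eta}|$ reads $2\gamma_e\gamma_\text{inv}\ge|L|$; combined with $\gamma_e>\gamma_\text{inv}$ it keeps both arguments of $\Phi$ non-positive throughout the interval $[\gamma_\text{inv},\gamma_e]$, so each class-conditional error stays at most $\tfrac12$ and the comparison is non-degenerate (it is also exactly what is needed if one bounds $R(\gamma_\text{inv})$ from below crudely rather than invoking the exact derivative). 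I expect the cancellation giving $R'(\gamma)=-2\eta\,\phi(a)$ to be the main obstacle: a priori the two $\Phi$-terms move in opposite directions as $\gamma$ grows, since $-\gamma-\tfrac{L}{2\gamma}$ and $-\gamma+\tfrac{L}{2\gamma}$ carry correction terms of opposite sign, so monotonicity only becomes transparent once one spots that $L=\log\tfrac{\eta}{1-\eta}$ forces $a^2-b^2=2L$ and hence $\eta\phi(a)=(1-\eta)\phi(b)$; the degenerate priors $\eta\in\{0,1\}$ are dispatched separately and trivially, and one should record that $\beta>0$ (from Lemma~\ref{lem:inv}) is what makes $\gamma_e$ a genuine separation.
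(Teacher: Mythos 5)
You are proving a statement the paper itself does not prove: Lemma~4 is imported directly from \citet{rosenfeld2020risks}, and the appendix contains no argument for it, so there is no in-paper proof to compare against line by line. Judged on its own, your argument is correct. The reduction is right: the optimal invariant predictor depends on the data only through a scalar with class-conditional law $\mathcal{N}(\pm\|\bm\mu_\text{inv}\|_2,\sigma_\text{inv}^2)$, the purely environmental predictor only through $\mathbf{p}^\top\*z_e\sim\mathcal{N}(\pm\beta\sigma_e^2,\sigma_e^2)$, and both classifiers (weights $2\bm\mu_\text{inv}/\sigma_\text{inv}^2$ and $2\beta$, each with intercept $L=\log\frac{\eta}{1-\eta}$) are exactly the Bayes rules for these one-dimensional features, so the population $0$--$1$ risks are $R(\gamma_\text{inv})$ and $R(\gamma_e)$ with $R(\gamma)=\eta\,\Phi(-\gamma-\frac{L}{2\gamma})+(1-\eta)\,\Phi(-\gamma+\frac{L}{2\gamma})$, $\gamma_\text{inv}=\|\bm\mu_\text{inv}\|_2/\sigma_\text{inv}$, $\gamma_e=\beta\sigma_e$. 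The key cancellation is also right: $a^2-b^2=2L$ forces $\eta\phi(a)=(1-\eta)\phi(b)$ and hence $R'(\gamma)=-2\eta\phi(a)<0$, so the first hypothesis $\gamma_e>\gamma_\text{inv}$ already gives strictly lower risk. This is a genuinely different, and sharper, route than the one the stated hypotheses are tailored to: the standard argument behind the cited result compares the two error terms separately, noting $a_e-a_\text{inv}=(\gamma_e-\gamma_\text{inv})\bigl(1-\frac{L}{2\gamma_e\gamma_\text{inv}}\bigr)$ and $b_e-b_\text{inv}=(\gamma_e-\gamma_\text{inv})\bigl(1+\frac{L}{2\gamma_e\gamma_\text{inv}}\bigr)$, so that under the second hypothesis $2\gamma_e\gamma_\text{inv}\ge|L|$ both class-conditional errors move favorably and no monotonicity lemma is needed. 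Your proof buys a stronger statement (the second hypothesis is unnecessary for the conclusion, which is consistent since the lemma only asserts sufficient conditions); the term-wise proof buys a two-line verification that explains why the second condition appears in the statement.

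One small inaccuracy in a side remark: the second hypothesis does \emph{not} keep $-\gamma+\frac{|L|}{2\gamma}\le 0$ throughout $[\gamma_\text{inv},\gamma_e]$, since $2\gamma_e\gamma_\text{inv}\ge|L|$ with $\gamma_e>\gamma_\text{inv}$ gives $2\gamma_e^2>|L|$ but not $2\gamma_\text{inv}^2\ge|L|$; its actual role is exactly the term-wise comparison above ($a_e\ge a_\text{inv}$). Your derivative argument never uses this claim, so correctness is unaffected, but you should also state explicitly at the outset that ``risk'' here means the population $0$--$1$ risk of these per-feature Bayes rules, as in the cited source.
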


\paragraph{Summary.} To summarize, the theoretical analysis demonstrates the difficulty of recovering the invariant classifier without using environmental features. In particular, there exists an invariant classifier that uses non-invariant features, and achieves lower risks than the classifiers only based on invariant features. 
As a result, spurious OOD samples can utilize environmental clues to deteriorate the OOD detection performance. Our main theorem provably shows the existence of OOD inputs with arbitrarily high confidence, and can fail to be distinguished from the ID data. 

\paragraph{Extension: Empirical Validation of Theoretical Analysis.} To further validate our analysis above, we comprehensively evaluate the OOD detection performance of models that are trained with recent prominent domain invariance learning objectives~\citep{arjovsky2019invariant,  bahng2020learning, krueger2020out, ganin2016domain, li2018deep, sagawa2019distributionally}~(Section~\ref{sec:irm} in Appendix). The results align with our theoretical analysis. 

\section{Discussion and related works}
\label{sec:related}
\paragraph{Out-of-distribution Uncertainty Estimation.}  
The phenomenon of neural networks' overconfidence to out-of-distribution data is revealed by Nguyen \emph{et al.}~\cite{nguyen2015deep}.
Early works attempt to improve the OOD uncertainty estimation by proposing the ODIN score~\citep{liang2018enhancing} and Mahalanobis
distance-based confidence score~\citep{Maha}.
Recent work by Liu \emph{et al.}~\citep{liu2020energy} proposed using an energy score for OOD detection, which demonstrated advantages over the softmax confidence score both empirically and theoretically. Huang and Li~\cite{huang2021mos} proposed a group-based OOD detection method that scales effectively to large-scale dataset ImageNet. Recent work by Lin \emph{et al.}~\cite{lin2021mood} also proposed dynamic OOD inference framework that improved
the computational efficiency of OOD detection. However, previous methods primarily focused on convention non-spurious OOD. We introduce a new formalization of OOD detection that encapsulates both spurious and non-spurious OOD data.


A parallel line of approaches resorts to generative models~\cite{goodfellow2014generative, kingma2018glow} that directly estimate in-distribution density~\citep{nalisnick2019deep, ren2019likelihood, serra2019input, xiao2020likelihood, kirichenko2020normalizing}. In particular, \citet{ren2019likelihood} addressed distinguishing between background and semantic content under unsupervised generative models. Generative approaches yield limiting performance compared with supervised discriminative models due to the lack of label information and typically suffer from high computational complexity.\@
Notably, none of the previous works systematically investigate the influence of spurious correlation for OOD detection.
Our work presents a novel perspective for defining OOD data and investigates the impact of spurious correlation in the training set. Moreover, our formulation is more general and broader than the image background (for example, gender bias in our CelebA experiments is another type of contextual bias beyond image background).

\paragraph{Near-ID Evaluations.} Our proposed spurious OOD can be viewed as a form of near-ID evaluation. Orthogonal to our work, previous works~\citep{winkens2020contrastive, roy2021does} considered the near-ID cases where the \emph{semantics} of OOD inputs are similar to that of ID data (\emph{e.g.}, CIFAR-10 vs. CIFAR-100). In our setting, spurious OOD inputs may have very different semantic labels but are statistically close to the ID data due to shared environmental features (\emph{e.g.}, boat vs. waterbird in Figure 1). While other works have considered domain shift~\citep{GODIN} or covariate shift~\citep{ovadia2019can}, they are more relevant for evaluating model generalization and robustness performance---in which case the goal is to make the model classify accurately into the ID classes and should not be confused with OOD detection task. We emphasize that semantic label shift (i.e., change of invariant feature) is more akin to OOD detection task, which concerns model reliability and detection of shifts where the inputs have disjoint labels from ID data and therefore {should not be predicted by the model}.

\paragraph{Out-of-distribution Generalization.}
Recently, various works have been proposed to tackle the issue of domain generalization, 
which aims to achieve high classification accuracy on new test environments consisting of inputs \emph{with invariant features}, and does not consider the change of invariant features at test time  (i.e., label space $\mathcal{Y}$ remains the same)---a key difference from our focus. Literature in OOD detection is commonly concerned about model reliability and detection of shifts where the OOD inputs have disjoint labels and therefore should not be predicted by the model.  In other words, we consider samples \emph{without invariant features}, regardless of the presence of environmental features or not. 

A plethora of algorithms are proposed: learning invariant representation across domains \citep{ganin2016domain, li2018deep, sun2016deep, li2018domain}, minimizing the weighted combination of risks from training domains \citep{sagawa2019distributionally}, using different risk penalty terms to facilitate invariance prediction \citep{arjovsky2019invariant, krueger2020out}, causal inference approaches \citep{peters2016causal}, and forcing the learned representation different from a set of pre-defined biased representations \citep{bahng2020learning}, mixup-based approaches \citep{zhang2018mixup, wang2020heterogeneous, luo2020generalizing}, etc.  
A recent study \citep{gulrajani2021in} shows that no domain generalization methods achieve superior performance than ERM across a broad range of datasets. 



\paragraph{Contextual Bias in Recognition.}  There has been a rich literature studying the classification performance in the presence of contextual bias~\citep{torralba2003contextual, beery2018recognition, barbu2019objectnet}. The reliance on contextual bias such as image backgrounds, texture, and color for object detection are investigated in \citep{ijcai2017zhu, dcngos2018, geirhos2018imagenettrained, zech2018variable, xiao2021noise, sagawa2019distributionally}. However, the contextual bias for OOD detection is underexplored. In contrast, our study systematically investigates the impact of spurious correlation on OOD detection and how to mitigate it. 

\section{Conclusion}
\label{sec:conclusion}
Out-of-distribution detection is an essential task in open-world machine learning. However, the precise definition is often left in vagueness, and common evaluation schemes can be too primitive to capture the nuances of the problem in reality. In this paper, we present a new formalization where we model the data distributional shifts by considering the invariant and non-invariant features. Under such formalization, we systematically investigate the impact of spurious correlation in the training set on OOD detection and further show insights on detection methods that are more effective in mitigating the impact of spurious correlation. Moreover, we provide theoretical analysis on why reliance on environmental features leads to high OOD detection error. We hope that our work will inspire future research on the understanding and formalization of OOD samples, new evaluation schemes of OOD detection methods, and algorithmic solutions in the presence of spurious correlation. 

\section*{Acknowledgement}
 Research is supported by the Office of the Vice Chancellor for Research and Graduate Education (OVCRGE) with funding from the Wisconsin Alumni Research Foundation (WARF).

\newpage
\bibliography{references}
\bibliographystyle{plainnat}
\newpage


\appendix
\onecolumn
\begin{center}
    \Large{\textbf{Supplementary Material}}
\end{center}

\section{Proofs for Theoretical Results}

\begin{lemmaa} 
\label{lem:bayes}
(Bayes optimal classifier) For any feature vector which is a linear combination of the invariant and environmental features $\Phi_e(\*x) = M_\text{inv}\*z_\text{inv} + M_e\*z_e$, the optimal linear classifier for an environment $e$ has the corresponding coefficient $ 2\Sigma_\Phi^{-1}\bm\mu_{\Phi}$, where:
\begin{align*}
    \bm\mu_{\Phi}&  = M_\text{inv} \bm\mu_{\text{inv}}+M_e \bm\mu_{e}\\
    \Sigma_{\Phi}& = M_\text{inv} M_\text{inv}^{T} \sigma_{\text{inv}}^{2}+M_e M_e^{T} \sigma_{e}^{2}
\end{align*}
\label{lemma:bayes}
\vspace{-20pt}
\end{lemmaa}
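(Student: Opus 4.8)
The plan is to recognize this as a textbook Gaussian (linear) discriminant computation, organized around the observation that the symmetric means $\pm\bm\mu_\Phi$ together with a shared covariance make the posterior log-odds exactly linear. First I would make explicit the (implicit) modeling assumption that, conditional on $y$, the latent vectors $\*z_\text{inv}$ and $\*z_e$ are independent, so that $[\*z_\text{inv},\*z_e]^\top \mid y$ is jointly Gaussian with mean $y[\bm\mu_\text{inv},\bm\mu_e]^\top$ and block-diagonal covariance $\mathrm{diag}(\sigma_\text{inv}^2 I,\sigma_e^2 I)$. Since $\Phi_e(\*x) = M_\text{inv}\*z_\text{inv} + M_e\*z_e$ is a fixed linear image of this vector, it is Gaussian conditional on $y$, with
\begin{align*}
\mathbb{E}[\Phi_e \mid y] &= y\,(M_\text{inv}\bm\mu_\text{inv} + M_e\bm\mu_e) = y\,\bm\mu_\Phi,\\
\mathrm{Cov}[\Phi_e \mid y] &= M_\text{inv}(\sigma_\text{inv}^2 I)M_\text{inv}^\top + M_e(\sigma_e^2 I)M_e^\top = \Sigma_\Phi,
\end{align*}
i.e. $\Phi_e \mid y \sim \mathcal{N}(y\bm\mu_\Phi,\Sigma_\Phi)$, where crucially $\Sigma_\Phi$ does not depend on the sign of $y$.

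Next I would write the Bayes-optimal decision rule via the posterior log-odds. With $P(y=1)=\eta$,
$$\log\frac{P(y=1\mid\Phi_e)}{P(y=-1\mid\Phi_e)} = \log\frac{\eta}{1-\eta} - \tfrac12(\Phi_e-\bm\mu_\Phi)^\top\Sigma_\Phi^{-1}(\Phi_e-\bm\mu_\Phi) + \tfrac12(\Phi_e+\bm\mu_\Phi)^\top\Sigma_\Phi^{-1}(\Phi_e+\bm\mu_\Phi).$$
Expanding both quadratic forms, the terms $\Phi_e^\top\Sigma_\Phi^{-1}\Phi_e$ and $\bm\mu_\Phi^\top\Sigma_\Phi^{-1}\bm\mu_\Phi$ cancel, leaving only the cross terms, which combine to $2\bm\mu_\Phi^\top\Sigma_\Phi^{-1}\Phi_e$. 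Using symmetry of $\Sigma_\Phi^{-1}$, the log-odds equals $(2\Sigma_\Phi^{-1}\bm\mu_\Phi)^\top\Phi_e + \log\eta/(1-\eta)$. Hence $P(y=1\mid\Phi_e) = \sigma\big((2\Sigma_\Phi^{-1}\bm\mu_\Phi)^\top\Phi_e + \log\eta/(1-\eta)\big)$, a logistic (strictly monotone) function of the linear score with weight vector $2\Sigma_\Phi^{-1}\bm\mu_\Phi$; since both the MAP classifier and the log-loss-optimal linear predictor are determined by this score up to the constant term, the claimed optimal coefficient $2\Sigma_\Phi^{-1}\bm\mu_\Phi$ follows (the constant $\log\eta/(1-\eta)$ being the one dropped per the paper's footnote).

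The only real technical point — and the step I would flag as the main obstacle, though it is a mild one — is the invertibility of $\Sigma_\Phi = M_\text{inv}M_\text{inv}^\top\sigma_\text{inv}^2 + M_e M_e^\top\sigma_e^2$. I would simply add the standing nondegeneracy assumption that this matrix is positive definite (equivalently, the row spaces of $M_\text{inv}$ and $M_e$ together span $\mathbb{R}^{\dim\Phi}$), which holds in all cases used downstream; alternatively one replaces $\Sigma_\Phi^{-1}$ by the pseudoinverse and restricts attention to the affine support of $\Phi_e$, on which the density ratio argument goes through verbatim. Everything else is the routine Gaussian algebra above, so I would keep that compressed in the final write-up.
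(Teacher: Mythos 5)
Your proposal is correct and follows essentially the same route as the paper: condition on $y$, observe that $\Phi_e$ is Gaussian with mean $y\bm\mu_\Phi$ and shared covariance $\Sigma_\Phi$, and apply Bayes' rule so the posterior log-odds are linear with coefficient $2\Sigma_\Phi^{-1}\bm\mu_\Phi$ plus the constant $\log\eta/(1-\eta)$. Your explicit remark on the invertibility (or pseudoinverse treatment) of $\Sigma_\Phi$ is a reasonable small addition that the paper leaves implicit, but it does not change the argument.
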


\begin{proof}
Since the feature vector $\Phi_e(\*x) = M_\text{inv}\*z_\text{inv} + M_e\*z_e$ is a linear combination of two independent Gaussian densities, $\Phi_e(\*x)$ is also Gaussian with the following density:
\begin{align}
     M_\text{inv}\*z_\text{inv} + M_e\*z_e \mid y \sim \mathcal{N}(y\cdot \underbrace{(M_\text{inv} \bm\mu_{\text{inv}}+M_e \bm\mu_{e})}_{ \bm\mu_{\Phi}}, \underbrace{M_\text{inv} M_\text{inv}^{T} \sigma_{\text{inv}}^{2}+M_e M_e^{T} \sigma_{e}^{2}}_{\Sigma_{\Phi}}).
\end{align}
The conditional density is given by:
\begin{align}
    p(\Phi_e(\*x)  = \phi \mid y) = \frac{1}{\sqrt{ (2\pi)^d |\Sigma_{\Phi}|}} \exp (-\frac{1}{2}(\phi - y\cdot \bm\mu_{\Phi})^\top \Sigma_{\Phi}^{-1} (\phi - y\cdot \bm\mu_{\Phi}))
\end{align}
Then, the probability of $y=1$ conditioned on $\Phi_e(\*x)=\phi$ can be expressed as:
\begin{equation*}
\begin{aligned}
 p\left(y=1 \mid \Phi_e = \phi \right) & =  \frac{ p(\Phi_e = \phi \mid y=1) p(y=1)}{p(\Phi_e = \phi \mid y=1) p(y=1) + p(\Phi_e = \phi \mid y=-1) p(y=-1)}  \\&=\frac{1}{1+\frac{p\left(\Phi_e = \phi \mid y=-1\right)p(y=-1)}{p\left(\Phi_e = \phi \mid y=1\right)p(y=1)}} \\
&=\frac{1}{1+\exp (-y \cdot 2 \phi^{\top} \Sigma_\Phi^{-1}\mu_{\Phi} - \log \eta / (1-\eta) )} \\
&=\sigma\left(y \cdot 2 \phi^{\top} \Sigma_\Phi^{-1}\mu_{\Phi}  + \log \eta / (1-\eta) \right),
\end{aligned}
\end{equation*}
where $\sigma(\cdot)$ is the sigmoid function.  The log odds of $y$ are linear w.r.t. the feature representation $\Phi_e$. Thus given feature $ \left[\begin{array}{c}\Phi_e(\*x)\\ 1\end{array}\right] = \left[\begin{array}{c}\phi \\ 1\end{array}\right]$ (appended with constant 1), the optimal classifier weights are $\left[\begin{array}{c}2\Sigma_\Phi^{-1}\mu_{\Phi} \\ \log \eta / (1-\eta)\end{array}\right]$. Note that the Bayes optimal classifier uses environmental features which are informative of the label but non-invariant. 
\end{proof}

\begin{lemmaa} (Invariant classifier using non-invariant features) 
Suppose $E \leq d_{e}$, given a set of environments $\mathcal{E}=\left\{e_{1}, e_{2}, \ldots, e_{E}\right\}$ such that all environmental means are linearly independent. Then there always exists a unit-norm vector $\*p$ and positive fixed scalar $\beta$ such that $\beta = \*p^{\top} \bm\mu_{e}/ \sigma_{e}^{2}$ $\forall e \in \mathcal{E}$. The resulting optimal classifier weights are 
$$\hat w = \left[ \begin{array}{c} \beta_{\text{inv}} \\ 2\beta \end{array}\right] = \left[ \begin{array}{c} 2\bm\mu_{\text{inv}} / \sigma_\text{inv}^{2}\\ 2\*p^\top\bm\mu_e / \sigma_e^2 \end{array}\right].$$
\label{short_cut}
\vspace{-12pt}
\end{lemmaa}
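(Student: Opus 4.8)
The plan is to prove the two parts of the lemma in sequence: first the existence of the unit-norm direction $\*p$ and the environment-independent scalar $\beta$, and then the closed form of the optimal classifier weights, which will follow by specializing the Bayes optimal classifier lemma to a suitable choice of $M_\text{inv}$ and $M_e$.

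For the existence part, I would collect the environmental means into the matrix $M = [\bm\mu_{e_1},\ldots,\bm\mu_{e_E}] \in \mathbb{R}^{d_e\times E}$ and the environmental variances into the vector $\*v = (\sigma_{e_1}^2,\ldots,\sigma_{e_E}^2)^\top$, whose entries are strictly positive. The key observation is that it suffices to find any $\*q$ with $M^\top\*q = \*v$, since normalizing then produces $\*p$ and $\beta$ simultaneously. Because the $\bm\mu_{e_i}$ are linearly independent and $E \le d_e$, the matrix $M$ has full column rank, so $\*q \mapsto M^\top\*q$ maps $\mathbb{R}^{d_e}$ onto $\mathbb{R}^E$; hence there is a $\*q_0$ with $M^\top\*q_0 = \*v$ (explicitly $\*q_0 = M(M^\top M)^{-1}\*v$). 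Since $\*v \neq \*0$, also $\*q_0 \neq \*0$, and I would set $\*p := \*q_0/\|\*q_0\|_2$ and $\beta := 1/\|\*q_0\|_2 > 0$. Then $\*p$ is unit-norm and $M^\top\*p = \beta\*v$, i.e. $\*p^\top\bm\mu_e/\sigma_e^2 = \beta$ for every $e\in\mathcal{E}$, which is exactly the claimed identity.

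For the weight formula, I would apply the Bayes optimal classifier lemma to the featurizer that keeps the invariant features alongside the scalar short-cut projection, $\Phi_e(\*x) = [\*z_\text{inv}^\top,\ \*p^\top\*z_e]^\top \in \mathbb{R}^{s+1}$. This is of the required linear-combination form $M_\text{inv}\*z_\text{inv} + M_e\*z_e$, where $M_\text{inv}$ is $I_s$ with a zero row appended below it and $M_e$ is the matrix whose last row is $\*p^\top$ and whose remaining rows vanish. Using $\*p^\top\*p = \|\*p\|_2^2 = 1$, one computes
\begin{align*}
\bm\mu_\Phi &= M_\text{inv}\bm\mu_\text{inv} + M_e\bm\mu_e = \begin{bmatrix}\bm\mu_\text{inv}\\ \*p^\top\bm\mu_e\end{bmatrix}, \\
\Sigma_\Phi &= M_\text{inv}M_\text{inv}^\top\sigma_\text{inv}^2 + M_eM_e^\top\sigma_e^2 = \begin{bmatrix}\sigma_\text{inv}^2 I_s & \*0 \\ \*0^\top & \sigma_e^2\end{bmatrix},
\end{align*}
so $\Sigma_\Phi$ is block-diagonal and trivially invertible, giving $2\Sigma_\Phi^{-1}\bm\mu_\Phi = [\,2\bm\mu_\text{inv}^\top/\sigma_\text{inv}^2,\ 2\*p^\top\bm\mu_e/\sigma_e^2\,]^\top$. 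Substituting the identity $\*p^\top\bm\mu_e/\sigma_e^2 = \beta$ from the first part yields $\hat w = [\,2\bm\mu_\text{inv}^\top/\sigma_\text{inv}^2,\ 2\beta\,]^\top$ (with the intercept $\log\eta/(1-\eta)$ omitted as in the preceding results). I would end by emphasizing that this $\hat w$ does not depend on $e$: the invariant block is environment-independent by assumption, and the environmental block equals the fixed scalar $2\beta$ for every environment by construction, so the same linear rule is Bayes-optimal on each environment while still relying on the non-invariant features.

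The computation in the second part is a routine specialization of the Bayes optimal classifier lemma, so the real content --- and the step I would be most careful about --- is the existence argument. It is precisely the hypothesis of $E\le d_e$ linearly independent environmental means that forces $M^\top$ to be surjective, which is what lets a single direction $\*p$ meet all $E$ constraints $\*p^\top\bm\mu_{e_i} = \beta\sigma_{e_i}^2$ at once; and the strict positivity of the variances is what makes the normalization legitimate and guarantees $\beta > 0$.
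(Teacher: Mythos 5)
Your proposal is correct and follows essentially the same route as the paper's proof: existence of $\*p$ and $\beta$ via the solvability of the linear system determined by the $E \le d_e$ linearly independent environmental means (your $\*q_0 = M(M^\top M)^{-1}\*v$ is exactly the paper's minimum-norm solution $A^\top(AA^\top)^{-1}\*b$ with $A = M^\top$, followed by normalization), and then specializing the Bayes optimal classifier lemma to the featurizer $\Phi_e(\*x) = [\*z_\text{inv}^\top,\ \*p^\top\*z_e]^\top$ with the same choice of $M_\text{inv}$ and $M_e$. Your write-up is slightly more explicit than the paper's (spelling out the block-diagonal $\Sigma_\Phi$ and the positivity of $\beta$), but the argument is the same.
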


\begin{proof}
Suppose $M_{\text{inv}} = \left[\begin{array}{l}I_{s\times s} \\ 0_{1\times s}\end{array}\right]$, and $M_e = \left[\begin{array}{c}0_{s\times e} \\ \*p^\top\end{array}\right] $ for some unit-norm vector $\*p \in \mathbb{R}^{d_e}$, then $\Phi_e(\*x) = \left[\begin{array}{c} \*z_\text{inv} \\ \*p^\top\*z_e\end{array}\right] $. By plugging into the results of Lemma~\ref{lem:bayes}, we can obtain the optimal classifier weights as $\left[ \begin{array}{c} 2\mu_{\text{inv}} / \sigma_\text{inv}^{2}\\ 2\*p^\top\bm\mu_e / \sigma_e^2 \end{array}\right]$.\footnote{The constant term is  $\log \eta / (1-\eta)$, as in Proposition~\ref{prop:invariant}.}
If the total number of environments is insufficient (i.e., $E \leq d_E$, which is a practical consideration because datasets with diverse environmental features w.r.t. a specific class of interest are often very computationally expensive to obtain), a short-cut direction $\*p$ that yields invariant classifier weights satisfies the system of linear equations $ A\*p = \*b$, where $A = \left[\begin{array}{l} \bm\mu_1^\top  \\ \cdots \\\bm\mu_E^\top\end{array}\right]$, and $\*b =\left[\begin{array}{l} \sigma_1^2 \\ \cdots \\\sigma_E^2\end{array}\right] $. As $A$ has linearly independent rows and $E \leq d_{e}$, there always exists feasible solutions, among which the minimum-norm solution is given by $\*p = A^\top(AA^\top)^{-1}\*b$. Thus $\beta = 1/\|A^\top(AA^\top)^{-1}\*b\|_2.$
\end{proof}

\begin{theoremm} (Failure of OOD detection under invariant classifier) Consider an out-of-distribution input which contains the environmental feature: $\Phi_{\text{out}}(\*x) = M_{\text{inv}}\*z_{\text{out}} + M_e\*z_e$, 
 where  $\*z_{\text{out}} \perp \bm\mu_{\text{inv}}$. 
Given the invariant classifier (cf. Lemma 2), the posterior probability for the OOD input is $p(y = 1 \mid \Phi_{\text{out}}) = \sigma\left(2\*p^\top\*z_e\beta + \log \eta / (1-\eta) \right)$, where $\sigma$ is the logistic function. Thus for arbitrary confidence $0<c := P(y = 1 \mid \Phi_{\text{out}})<1$, there exists $\Phi_{\text{out}}(\*x)$ with $\*z_e$ such that $\*p^\top\*z_e = \frac{1}{2\beta} \log \frac{c(1 -\eta)}{\eta(1-c)}$. 
\end{theoremm}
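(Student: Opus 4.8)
The plan is to apply Lemma~\ref{short_cut} directly: the invariant classifier weights are $\hat w = [\,2\bm\mu_{\text{inv}}/\sigma_{\text{inv}}^2 \;;\; 2\*p^\top\bm\mu_e/\sigma_e^2\,] = [\,2\bm\mu_{\text{inv}}/\sigma_{\text{inv}}^2 \;;\; 2\beta\,]$, acting on the feature $\Phi_e(\*x) = [\,\*z_\text{inv} \;;\; \*p^\top\*z_e\,]$ (appended with a constant $1$ whose weight is $\log\eta/(1-\eta)$). First I would write down the log-odds for the OOD input, which by the linearity established in the proof of Lemma~\ref{lem:bayes} equals $\hat w^\top [\,\Phi_{\text{out}}(\*x)\;;\;1\,]$. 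Plugging in $\Phi_{\text{out}}(\*x) = M_{\text{inv}}\*z_{\text{out}} + M_e\*z_e$ with the same $M_{\text{inv}}, M_e$ as in Lemma~\ref{short_cut}, this gives $\Phi_{\text{out}}(\*x) = [\,\*z_{\text{out}} \;;\; \*p^\top\*z_e\,]$, so the log-odds is
$$
\frac{2\bm\mu_{\text{inv}}^\top}{\sigma_{\text{inv}}^2}\*z_{\text{out}} \;+\; 2\beta\,\*p^\top\*z_e \;+\; \log\frac{\eta}{1-\eta}.
$$

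The key step is the assumption $\*z_{\text{out}} \perp \bm\mu_{\text{inv}}$, i.e. $\bm\mu_{\text{inv}}^\top \*z_{\text{out}} = 0$, which kills the first term and leaves the log-odds equal to $2\beta\,\*p^\top\*z_e + \log\eta/(1-\eta)$. Passing through the logistic function $\sigma$ yields $p(y=1 \mid \Phi_{\text{out}}) = \sigma(2\beta\,\*p^\top\*z_e + \log\eta/(1-\eta))$, which is the claimed posterior formula. For the second assertion, I would solve $c = \sigma(2\beta\,\*p^\top\*z_e + \log\eta/(1-\eta))$ for $\*p^\top\*z_e$: applying $\sigma^{-1}(c) = \log\frac{c}{1-c}$ gives $2\beta\,\*p^\top\*z_e = \log\frac{c}{1-c} - \log\frac{\eta}{1-\eta} = \log\frac{c(1-\eta)}{\eta(1-c)}$, hence $\*p^\top\*z_e = \frac{1}{2\beta}\log\frac{c(1-\eta)}{\eta(1-c)}$. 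Since $\beta > 0$ is a fixed nonzero scalar (Lemma~\ref{short_cut}) and this is a single linear constraint on $\*z_e \in \mathbb{R}^{d_e}$ with $d_e \geq 1$, such a $\*z_e$ (and hence such an OOD input $\Phi_{\text{out}}(\*x)$) always exists, for any target confidence $c \in (0,1)$.

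There is essentially no analytical obstacle here — the result is a corollary of Lemmas~\ref{lem:bayes} and~\ref{short_cut} together with the orthogonality hypothesis. The only point requiring a little care is bookkeeping: making sure the block structure of $M_{\text{inv}}, M_e$ used to define $\Phi_{\text{out}}$ matches the one in Lemma~\ref{short_cut} (so that the second coordinate of the feature really is $\*p^\top\*z_e$ with the \emph{same} $\*p$ realizing the environment-independent $\beta$), and tracking the constant term $\log\eta/(1-\eta)$ consistently through the sigmoid inversion. One might also remark on why $\*z_{\text{out}} \perp \bm\mu_{\text{inv}}$ is the natural encoding of ``no invariant feature related to the label'': it is precisely the condition under which the invariant part of the classifier contributes nothing, so the prediction is driven entirely by the spurious projection $\*p^\top\*z_e$ — which is the conceptual punchline the theorem is meant to deliver.
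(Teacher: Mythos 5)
Your proposal is correct and follows essentially the same route as the paper: plug the OOD feature $[\*z_{\text{out}};\,\*p^\top\*z_e]$ into the invariant classifier of Lemma 2, use $\bm\mu_{\text{inv}}^\top\*z_{\text{out}}=0$ to kill the invariant term so the log-odds reduce to $2\beta\,\*p^\top\*z_e+\log\eta/(1-\eta)$, then invert the sigmoid to get $\*p^\top\*z_e=\frac{1}{2\beta}\log\frac{c(1-\eta)}{\eta(1-c)}$. The only cosmetic difference is that the paper re-derives the posterior from Bayes' rule with the explicit Gaussian class-conditional densities and cancels terms, whereas you invoke the log-linear posterior already established in the proof of Lemma 1 — the same computation in slightly different packaging.
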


\begin{proof}
Consider an out-of-distribution input $\*x_{\text{out}}$ with  $ M_\text{inv} = \left[\begin{array}{l}I_{s\times s} \\ 0_{1\times s}\end{array}\right]$, and $M_e = \left[\begin{array}{c}0_{s\times e} \\ \*p^\top\end{array}\right] $, then the feature representation is $\Phi_e(\*x) = \left[\begin{array}{c} \*z_\text{out} \\ \*p^\top\*z_e\end{array}\right]$, where $\*p$ is the unit-norm vector defined in Lemma~\ref{lem:inv}.
By Bayes' rule, the posterior probability of $y=1$ can be expressed as:
\begin{equation}
\label{eq:general}
\begin{aligned}
P\left(y = 1 \mid \*z_{\text{out}}, \*p^\top\*z_e\right) &=\frac{P\left(\*z_{\text{out}},\*p^\top\*z_e, y = 1\right)}{P\left(\*z_{\text{out}}, \*p^\top\*z_e\right)} \\
&=\frac{P\left(\*z_{\text{out}} \mid y=1\right) P(\*p^\top\*z_e \mid y=1)P(y=1)}{P\left(\*z_{\text{out}}, \*p^\top\*z_e\right)}\\
&=\frac{1}{1 + \frac{P\left(\*z_{\text{out}} \mid y=-1\right) P(\*p^\top\*z_e \mid y=-1)P(y=-1)}{P\left(\*z_{\text{out}} \mid y=1\right) P(\*p^\top\*z_e \mid y=1)P(y=1)} }
\end{aligned}
\end{equation}

Recall that the conditional density is given by:
\begin{align}
    p(\*z_\text{out} \mid y) = \frac{1}{\sqrt{ (2\pi)^s |\sigma_\text{inv}^2 I|}} \exp (-\frac{1}{2}(\*z_\text{out}  - y\cdot \bm\mu_\text{inv})^\top \frac{1}{\sigma^2_\text{inv} } \cdot I \cdot (\*z_\text{out}  - y\cdot \bm\mu_\text{inv})).
\end{align}

Canceling common terms, we get
\begin{equation}
\label{eq:res}
\begin{aligned}
P\left(y = 1 \mid \*z_{\text{out}}, \*p^\top\*z_e\right) 
&=\frac{1}{1+ \frac{\exp\left( -  \bm\mu_\text{inv}^\top \*z_{\text{out}}/\sigma_\text{inv}^2 - \*p^\top\*z_e \beta\right)(1-\eta) }{\exp\left(\bm\mu_\text{inv}^\top \*z_{\text{out}}/\sigma_\text{inv}^2 + \*p^\top\*z_e \beta\right) \eta}} \\
&=\frac{1}{1+\exp \left(-\left(2 \*p^\top\*z_e \beta + \log \eta / (1-\eta)\right) \right)}
\end{aligned}
\end{equation}
Then we have $P(y = 1 \mid \Phi_{\text{out}})  = P(y = 1 \mid \*z_{\text{out}}, \*p^\top\*z_e) = \sigma\left(2\*p^\top\*z_e\beta + \log \eta / (1-\eta) \right)$, where $\sigma$ is the logistic function. Thus for arbitrary confidence $0<c := P(y = 1 \mid \Phi_{\text{out}})<1$, there exists $\Phi_{\text{out}}(\*x)$ with $\*z_e$ such that $\*p^\top\*z_e = \frac{1}{2\beta} \log \frac{c(1 -\eta)}{\eta(1-c)}$.
\end{proof}

\textbf{Remark: }  In a more general case, $\*z_{\text{out}}$ can be modeled as a random vector  that is independent of the in-distribution labels $y=1$ and $y=-1$ and environmental features: $\*z_{\text{out}} \indep y$ and $\*z_{\text{out}} \indep \*z_e$. Thus in Eq.~\ref{eq:general} we have $P\left(\*z_{\text{out}} \mid y =1\right) = P\left(\*z_{\text{out}} \mid y = -1\right) = P\left(\*z_{\text{out}}\right)$. Then $P(y = 1 \mid \Phi_{\text{out}})  = \sigma\left(2\*p^\top\*z_e\beta + \log \eta / (1-\eta) \right)$, same as in Eq.~\ref{eq:res}. Therefore our main theorem still holds under more general case. 

\newpage
\section{Extension: Color Spurious Correlation}
To further validate our findings beyond background and gender spurious (environmental) features, we provide additional experimental results with the ColorMNIST dataset, as shown in Figure~\ref{fig:mnist_illust}. 

\begin{figure*}[ht]
  \centering
  \vspace{-0.3cm}
    \includegraphics[width=0.9\linewidth]{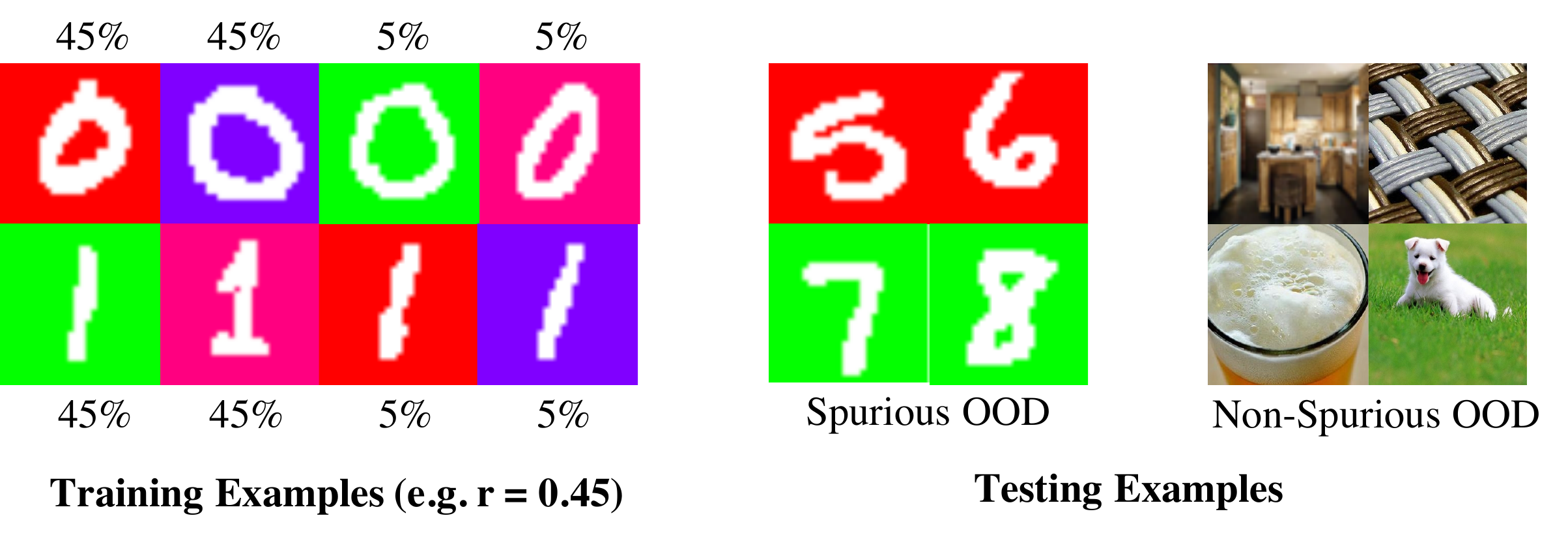}
\caption{\small \textbf{Left}: Training environments of ColorMNIST. The digit 0 correlates both red and purple background with probability $r$, whereas digit 1 correlates with green and pink with probability $r$. \textbf{Right}: Spurious OOD inputs contain the shared environmental feature (color background) yet with different digit labels (\emph{e.g.}, not 0 or 1). Non-spurious (conventional) OOD samples share neither the digit semantics nor colors in the training set.}
\label{fig:mnist_illust}
\end{figure*}

\paragraph{Evaluation Task 3: ColorMNIST.} The ColorMNIST dataset is modified from MNIST~\citep{lecun1998gradient}, which composes colored backgrounds on digit images. In this dataset, $\mathcal{E}=\{\texttt{red}, \texttt{green}, \texttt{purple},\texttt{pink}\}$ denotes the background color and we use $\mathcal{Y}=\{0,1\}$ as in-distribution classes. The correlation between the background color $e$ and the digit $y$ is explicitly controlled, with $r \in \{0.25, 0.35, 0.45\}$. That is, $r$ denotes the probability of $P(e=\texttt{red} \mid y=0) = P(e=\texttt{purple} \mid y=0) = P(e=\texttt{green} \mid y=1) = P(e=\texttt{pink} \mid y=1)$, while $0.5-r = P(e=\texttt{green} \mid y=0) = P(e=\texttt{pink} \mid y=0) = P(e=\texttt{red} \mid y=1) = P(e=\texttt{purple} \mid y=1)$. Note that the maximum correlation $r$ (reported in Table~\ref{tab:erm_ablation_cmnist}) is $0.45$. As ColorMNIST is relatively simpler compared to Waterbirds and CelebA, further increasing the correlation results in less interesting environments where the learner can easily pick up the contextual information.
For spurious OOD, we use digits $\{5,6,7,8,9\}$ with background color \texttt{red} and \texttt{green}, which contain overlapping environmental features as the training data. For non-spurious OOD, following common practice \citep{MSP}, we use the \texttt{Textures}~\citep{cimpoi2014describing}, \texttt{LSUN}~\citep{lsun} and \texttt{iSUN}~\citep{xu2015turkergaze} datasets.  We train on ResNet-18~\citep{he2016deep}, which achieves $99.9\%$ accuracy on the in-distribution test set. The OOD detection performance is shown in  Table~\ref{tab:erm_ablation_cmnist}.

\begin{table*}[!h]
    \ra{1.2}
    \centering
    \resizebox{0.85\textwidth}{!}{
    \begin{tabular}{cccccccc}\toprule
          & & \multicolumn{2}{c}{$\textbf{r=0.25}$} & \multicolumn{2}{c}{$\textbf{r=0.35}$}
          & \multicolumn{2}{c}{$\textbf{r=0.45}$}
          \\\cmidrule(lr){3-4}\cmidrule(lr){5-6}\cmidrule(lr){7-8} \textbf{OOD Type} & \textbf{Test Set}&
              \textbf{FPR95} $\downarrow$     &      \textbf{AUROC} $\uparrow$   &    \textbf{FPR95}  $\downarrow$     &      \textbf{AUROC}$\uparrow$    &
                  \textbf{FPR95} $\downarrow$      &      \textbf{AUROC} $\uparrow$ \\  \midrule
    \textbf{Spurious OOD } &  &$5.40\pm1.81$ &$98.25\pm0.89 $  &$13.5\pm1.90$  &$94.91\pm0.86$ & $30.45\pm10.42$
             & $86.74\pm7.76$ \\
            \midrule
            \multirow{4}*{    \textbf{Non-spurious OOD}}
    &  Texture  &$ 0.03\pm0.03$&$99.42 \pm0.35$ & $0.43\pm0.49$&  $99.41\pm0.17$&$9.93\pm5.26$&$ 95.94\pm0.88$ \\
   &  iSUN  &  $0.18\pm0.19$& $99.43\pm0.27$& $0.25\pm0.10$&  $99.39\pm0.14$&$6.68\pm4.23$ &$98.16\pm1.25$  \\
    &  LSUN  &   $0.3\pm0.28$& $99.55\pm0.17$&  $0.63\pm0.40$&  $99.40\pm0.19$&$6.33\pm4.93$ &$98.51\pm0.80$  \\
    \bottomrule
    \end{tabular}
    }
\caption{\small OOD detection performance of models trained on \textbf{ColorMNIST}. Increased spurious correlation in the training set results in worsen performance for both non-spurious and spurious OOD samples. For any fixed spurious correlation, spurious OOD is more challenging than non-spurious OOD samples. Results (mean and std) are estimated over 4 runs for each setting.}
     \label{tab:erm_ablation_cmnist}
\end{table*}

\section{Visualization and Histograms}
\paragraph{Visualization.} As an extension of Section~\ref{sec:ood_score}, here we present the visualization of embeddings for ID samples and samples from non-spurious OOD test sets LSUN (Figure~\ref{fig:umap_lsun}) and iSUN (Figure~\ref{fig:umap_isun}) based on the CelebA task. We can observe that for both non-spurious OOD test sets, the feature representations of ID and OOD are separable, similar to observations in Section~\ref{sec:ood_score}.

\begin{figure*}[h]
  \centering
  \begin{subfigure}[b]{0.35\linewidth}
    \includegraphics[width=\linewidth]{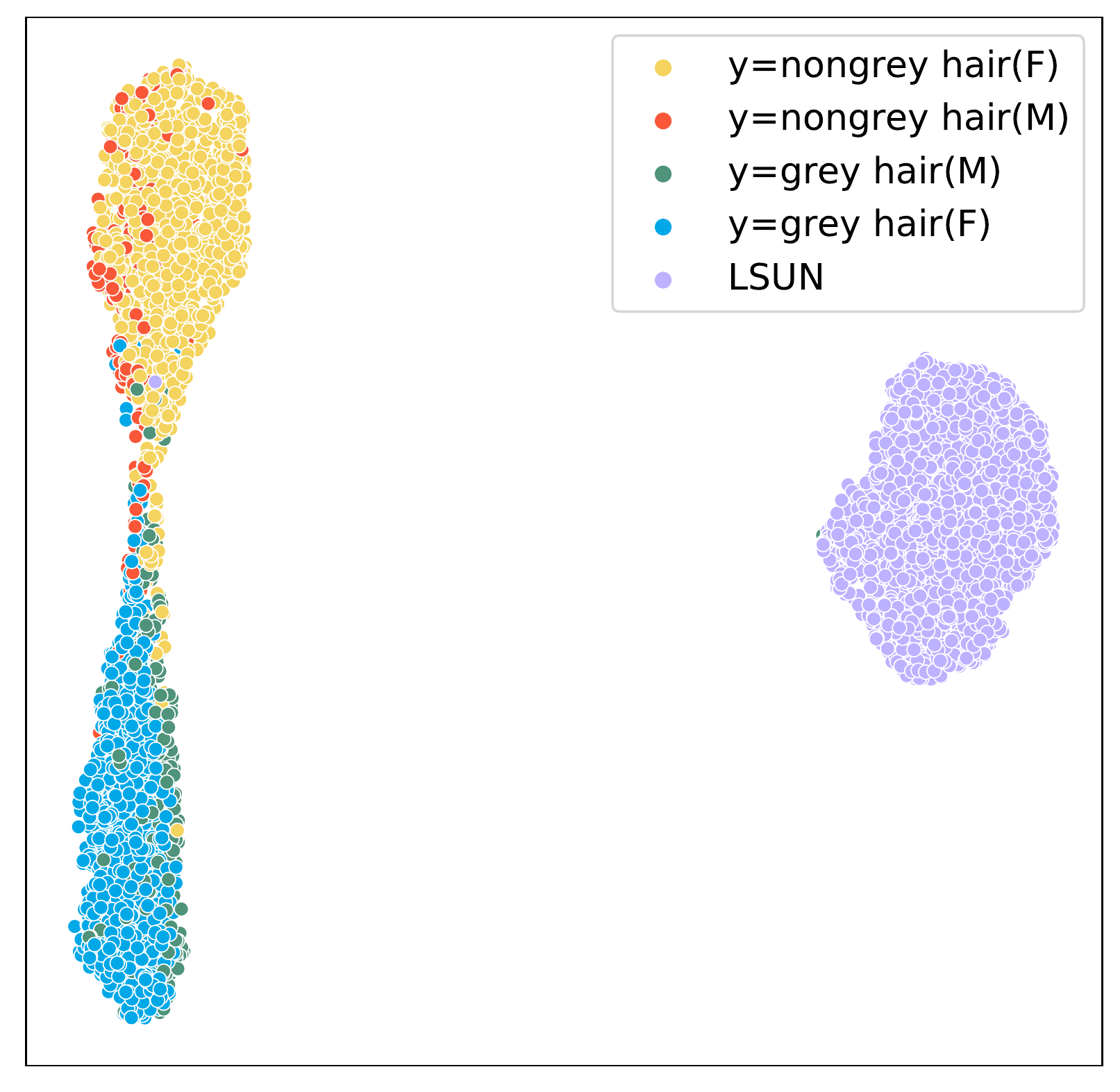}
     \caption{Feature for ID data and LSUN (Non-spurious). }
     \label{fig:umap_lsun}
  \end{subfigure}
  \hspace{4em}
    \begin{subfigure}[b]{0.35\linewidth}
    \includegraphics[width=\linewidth]{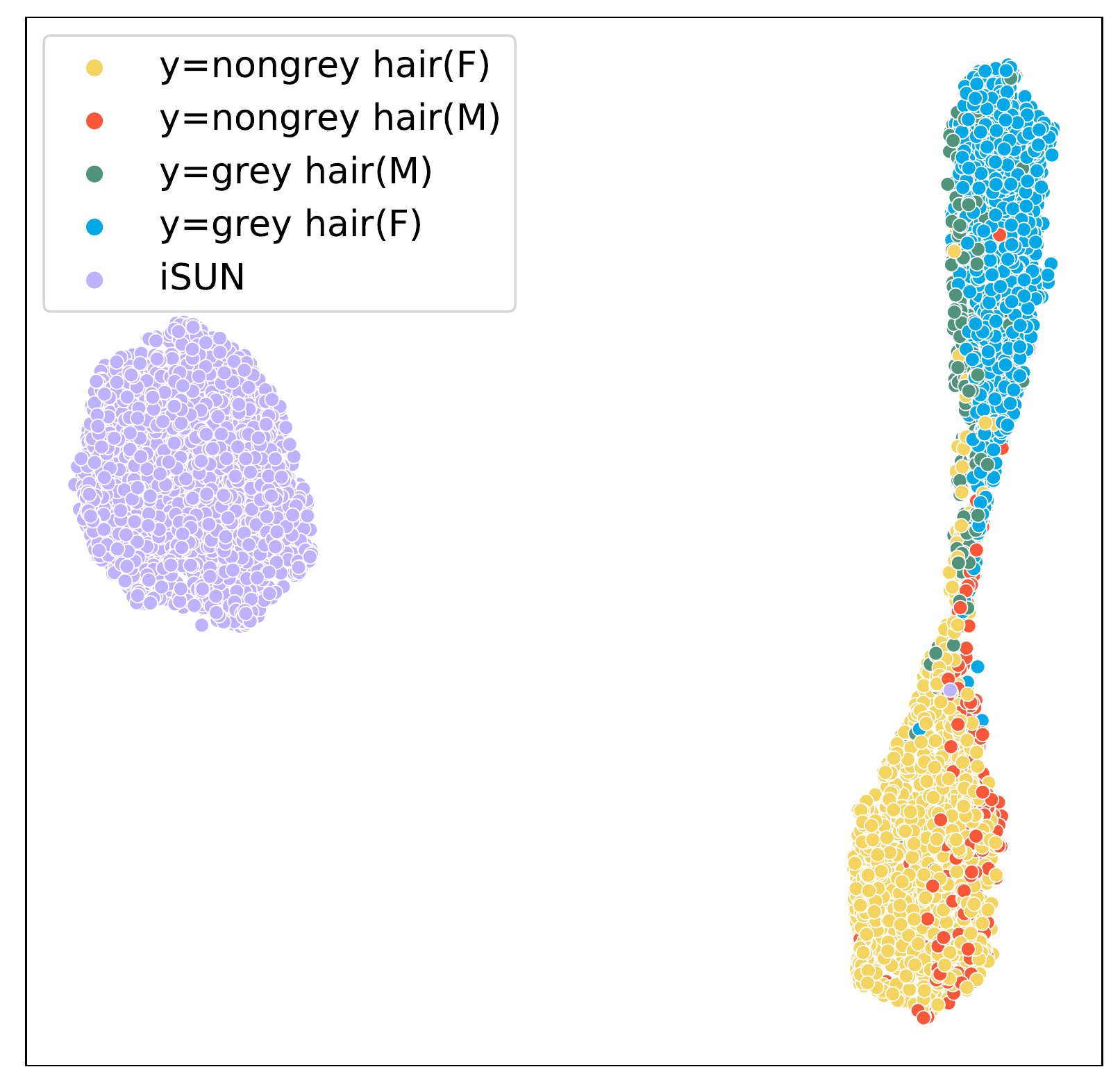}
    \caption{Feature for ID data and iSUN (Non-spurious). }
      \label{fig:umap_isun}
  \end{subfigure}
\caption{ \small Visualization of feature embedding for in-distribution samples and non-spurious OOD samples: LSUN (Left) and iSUN (right).} 
\end{figure*}
\paragraph{Histograms.} We also present histograms of the Mahalanobis distance score and MSP score for non-spurious OOD test sets iSUN and LSUN based on the CelebA task. As shown in Figure~\ref{fig:hist_other}, for both non-spurious OOD datasets, the observations are similar to what we describe in Section~\ref{sec:ood_score} where ID and OOD are more separable with Mahalanobis score than MSP score. This further verifies that feature-based methods such as Mahalanobis score is promising to mitigate the impact of spurious correlation in the training set for non-spurious OOD test sets compared to output-based methods such as MSP score.
\begin{figure*}[!h]
  \centering
  \vspace{-0.3cm}
    \includegraphics[width=0.75\linewidth]{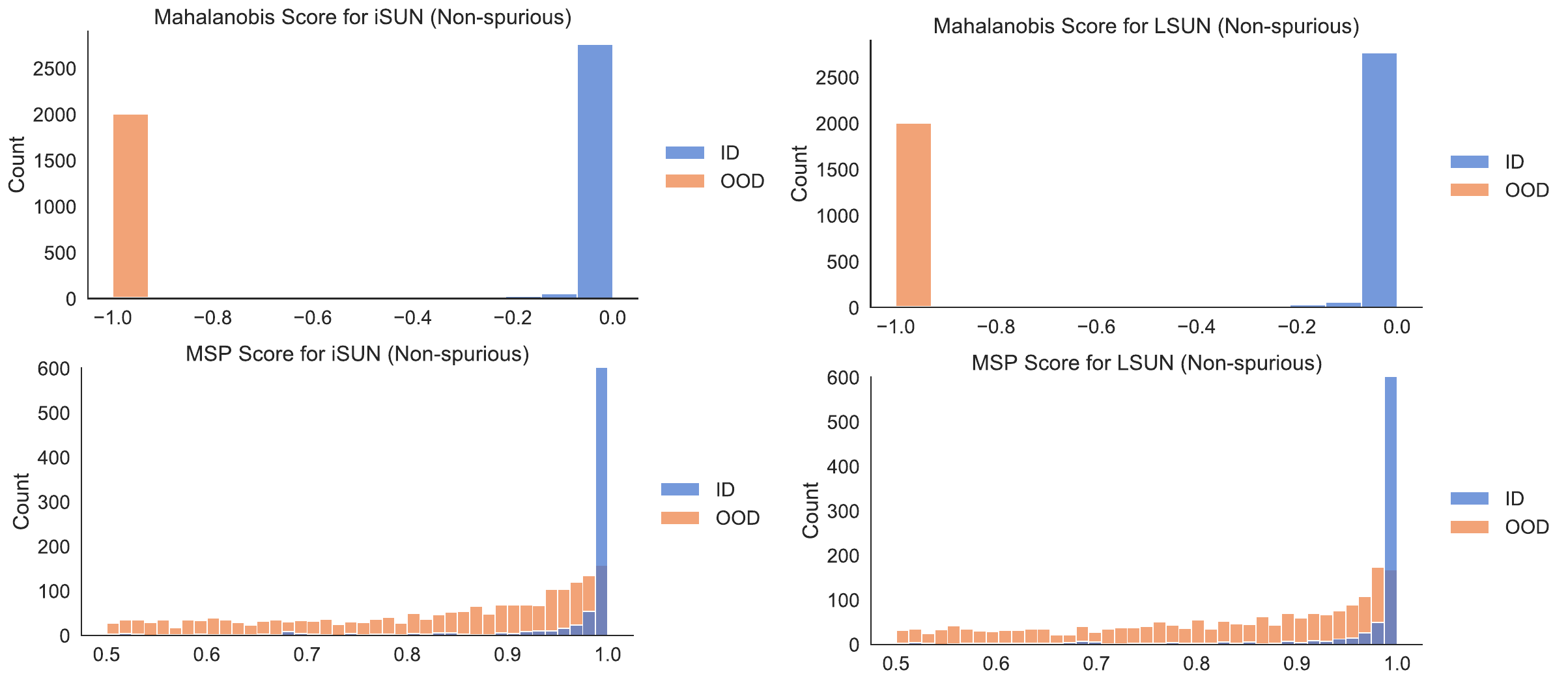}
\caption{\small \textbf{Left}: Histograms of the Mahalanobis score and MSP score for iSUN (Non-spurious OOD). \textbf{Right}: Histograms of the Mahalanobis score and MSP score for LSUN (Non-spurious OOD).}
\label{fig:hist_other}
\end{figure*}
\section{Extension: Adjusting Spurious Correlation in the Training Set for CelebA}
To further validate if our observations on the impact of the extent of spurious correlation in the training set still hold beyond the Waterbirds and ColorMNIST tasks, here we subsample the CelebA dataset (described in Section~\ref{sec:erm}) such that the spurious correlation is reduced to $r = 0.7$. Note that we do not further reduce the correlation for CelebA because that will result in a small size of total training samples in each environment which may make the training unstable. The results are shown in Table~\ref{tab:more_celebA}. The observations are similar to what we describe in Section~\ref{sec:erm} where increased spurious correlation in the training set results in worsened performance for both non-spurious and spurious OOD samples. For example, the average FPR95 is reduced by $3.37\%$ for LSUN, and $2.07\%$ for iSUN when $r = 0.7$ compared to $r=0.8$. In particular, spurious OOD is more challenging than non-spurious OOD samples under both spurious correlation settings.
\begin{table*}[!h]
    \ra{1.2}
    \centering
    \scalebox{0.85}{
    \begin{tabular}{cccccc}\toprule
          & & \multicolumn{2}{c}{$\textbf{r=0.7}$}
          & \multicolumn{2}{c}{$\textbf{r=0.8}$}
          \\\cmidrule(lr){3-4}\cmidrule(lr){5-6} \textbf{OOD Type} & \textbf{Test Set}&
              \textbf{FPR95} $\downarrow$     &      \textbf{AUROC} $\uparrow$   &    \textbf{FPR95}  $\downarrow$     &      \textbf{AUROC}$\uparrow$    $\uparrow$ \\  \midrule
    \textbf{Spurious OOD} & & $70.18\pm1.76$ & $83.30\pm0.68$&    $71.28\pm4.12$ & $82.04\pm2.64$  \\
            \midrule
            \multirow{4}*{    \textbf{Non-spurious OOD}}
   &  iSUN  & $15.28\pm3.18$ & $97.37\pm0.14$ & $17.35\pm2.97$ &  $97.03\pm0.30$ \\
    &  LSUN  & $15.48\pm3.57$  & $97.56\pm0.20$   & $18.85\pm2.44$ & $96.90\pm0.17$  \\
    &  SVHN  & $5.39\pm 4.30$ & $98.89\pm0.90$ & $5.63\pm2.60$ &  $98.64\pm0.21$  \\
    \bottomrule
    \end{tabular}
    }
    \caption{\small OOD detection performance of  models trained on \textbf{CelebA}~\citep{liu2015faceattributes}. The observations are similar to the Waterbirds and ColorMNIST tasks. Increased spurious correlation in the training set results in worsen performance for both non-spurious and spurious OOD samples. In particular, spurious OOD is more challenging than non-spurious OOD samples.  Results (mean and std) are estimated over 4 runs for each setting.}
    \label{tab:more_celebA}
     \vspace{-0.3cm}
\end{table*}

\section{Extension: Training with Domain Invariance Objectives}
\label{sec:irm}

In this section, we provide empirical validation of our analysis in Section~\ref{sec:theory}, where we evaluate the OOD detection performance based on models that are trained with recent prominent domain invariance learning objectives
where the goal is to find a classifier that does not overfit to environment-specific properties of the data distribution.
Note that OOD \textbf{generalization} aims to achieve high classification accuracy on new test environments consisting of inputs \emph{with invariant features}, and does not consider the absence of invariant features at test time---a key difference from our focus. In the setting of spurious OOD \textbf{detection}, we consider test samples in environments \emph{without invariant features}. 
We begin by describing the more popular objectives and include a more expansive list of invariant learning approaches in our study. 

\paragraph{Invariant Risk Minimization (IRM).} IRM~\citep{arjovsky2019invariant} assumes the existence of a feature representation $\Phi$ such that the optimal classifier on top of these features is the same across all environments. To learn this $\Phi$, the IRM objective solves the following bi-level optimization problem:
\begin{equation}
\label{irm_obj}
\min _{\Phi, \hat{w}} \frac{1}{|\mathcal{E}|} \sum_{e \in \mathcal{E}} \mathcal{R}^{e}(\Phi, \hat{w}) \quad \text { s.t. } \quad \hat{w} \in \underset{w}{\arg\min } \mathcal{R}^{e}(\Phi, w) \quad \forall e \in \mathcal{E}
\end{equation}

The authors also propose a practical version named IRMv1 as a surrogate to the original challenging bi-level optimization formula (\ref{irm_obj}) which we adopt in our implementation:
\begin{equation}
    \min _{\Phi: \mathcal{X} \rightarrow \mathcal{Y}} \sum_{e \in \mathcal{E}_{\mathrm{tr}}} R^{e}(\Phi)+\lambda \cdot\left\|\nabla_{w \mid w=1.0} R^{e}(w \cdot \Phi)\right\|^{2}
\end{equation}
where an empirical approximation of the gradient norms in IRMv1 can be obtained by a balanced partition of batches from each training environment.

\paragraph{Group Distributionally Robust Optimization (GDRO).} GDRO~\citep{sagawa2019distributionally} minimizes the worst-group risk:
\begin{equation}
\label{gdro}
    \min_{w} \max_{g\in \mathcal{G}} \mathbb{E}_{(\*x,y)\sim \hat P_g}[\ell(w;(x,y))],
\end{equation}
where each example belongs to a group $g\in \mathcal{G} = \mathcal{Y}\times \mathcal{E}$, with $g=(y,e)$.
The model learns the correlation between label $y$ and environment $e$ in the training data would do poorly on minority group where the correlation does not hold. Hence, by minimizing the worst-group risk, the model is discouraged from relying on spurious features. The authors show that objective (\ref{gdro}) can be rewritten as:
\begin{equation}
    \min_{w} \sup_{q \in \Delta{m}} \sum_{g=1}^{m} q_{g} \mathbb{E}_{(x, y) \sim P_{g}}[\ell(w ;(x, y))]
\end{equation}
Then Algorithm 1 in \citep{sagawa2019distributionally} can be used for optimization where stochastic gradient descent on $w$ is interleaved with exponentiated gradient ascent on $q$. For further details and convergence analysis, we encourage interested readers to refer to \citep{sagawa2019distributionally}.

\paragraph{Alternative Objectives.} IRM is motivated by the existence of a feature representation $\Phi$ such that $\mathbb{E}[y \vert \Phi(\*x)]$ is invariant across environments. Follow-up works proposed several variations, based on different notions of invariance. In particular, ~\citep{krueger2020out} proposed Risk Extrapolation (\textbf{REx}), which aims to achieve stronger invariance $p(y \vert \Phi(\*x))$ by penalizing the variance of risks of environments. Other approaches have proposed to remove the predictability of $p(e\vert \Phi(\*x))$ through domain adversarial losses such as \textbf{DANN}~\citep{ganin2016domain} and \textbf{CDANN}~\citep{ li2018deep} (adapted for domain generalization). 
For completeness, we include all the aforementioned methods in our study\footnote{Our implementation for most of the training objects are based on: \url{https://github.com/facebookresearch/DomainBed}.}. 

\begin{table}[h]
    \ra{1.2}
    \centering
    \small
    \scalebox{1}{
    \begin{tabular}{lcc}\toprule
         \textbf{Training Objective} &       \textbf{FPR95} $\downarrow$     &      \textbf{AUROC} $\uparrow$  \\\midrule
    \textbf{ERM}~\citep{vapnik1992principles}  & $71.28\pm4.12$ & $82.04\pm2.64$  \\
    \textbf{IRM}~\citep{arjovsky2019invariant}   & $70.09\pm3.67$ & $82.66\pm3.28$ \\
    \textbf{GDRO}~\citep{sagawa2019distributionally}  & $68.77\pm4.56$ & $83.39\pm3.12$ \\
    \textbf{REx}~\citep{krueger2020out}   & $72.43\pm3.21$ & $81.88\pm3.19$ \\
    \textbf{DANN}~\citep{ganin2016domain}  & $70.01\pm7.47$ & $82.30\pm9.89$ \\
    \textbf{CDANN}~\citep{li2018deep} & $69.87\pm4.19$ & $82.93\pm4.55$ \\
    \bottomrule
    \end{tabular}
    }
    \caption{\small Spurious OOD detection performance on CelebA~\citep{liu2015faceattributes} where $r\approx0.8$. The models are trained with domain invariance learning objectives. The results verify that detecting spurious OOD data is challenging as no training objectives significantly outperform ERM.}
    \label{tab:invariance}
\end{table}

\begin{table}[h]
    \ra{1.2}
    \centering
    \small
    \scalebox{1}{
    \begin{tabular}{lcc}\toprule
         \textbf{Training Objective} &       \textbf{FPR95} $\downarrow$     &      \textbf{AUROC} $\uparrow$  \\\midrule
    \textbf{ERM}~\citep{vapnik1992principles}  & $74.22\pm13.12$ & $80.98\pm4.45$  \\
    \textbf{IRM}~\citep{arjovsky2019invariant}   & $72.41\pm13.27$ & $81.29\pm5.24$ \\
    \textbf{GDRO}~\citep{sagawa2019distributionally}  & $70.79\pm11.51$ & $82.94\pm4.59$ \\
    \textbf{REx}~\citep{krueger2020out}   & $73.83\pm15.26$ & $81.25\pm4.99$ \\
    \textbf{DANN}~\citep{ganin2016domain}  & $72.81\pm13.47$ & $81.11\pm6.21$ \\
    \textbf{CDANN}~\citep{li2018deep} & $72.37\pm14.20$ & $82.13\pm3.53$ \\
    \bottomrule
    \end{tabular}
    }
    \caption{\small Spurious OOD detection performance on Waterbirds~\citep{sagawa2019distributionally} where $r=0.7$. The models are trained with domain invariance learning objectives. The results are similar to what we observe for CelebA, where detecting spurious OOD data is challenging.}
    \label{tab:invariance_waterbird}
\end{table}

\paragraph{Results.} Table~\ref{tab:invariance} summarizes the OOD detection performance for spurious OOD samples based on models trained with various invariance learning objectives. All methods are trained on the CelebA dataset described in Section~\ref{sec:erm} where ``Grey hair'' is highly correlated with ``Male' in the training set ($r\approx0.8$). We then compute the energy score~\citep{liu2020energy} from the model output $f(\*x)$ as OOD uncertainty measurement for OOD detection. 
From the table, we can observe that despite being motivated by invariance learning, many objectives do not significantly outperform the ERM baseline. For example, DGRO only mildly improves over ERM ($1.35\%$ improvement in terms of AUROC). Moreover, invariance learning methods generally display larger variances across runs compared to ERM. 
Similar observations still hold for Waterbirds, where we choose $r=0.7$, as shown in Table~\ref{tab:invariance_waterbird}. A recent study \citep{gulrajani2021in} shows that ERM remains competitive in OOD generalization tasks compared with various domain invariance learning methods across a broad range of real-world datasets. While our results suggest that given high spurious correlation in the training set, detecting spurious OOD remains challenging, even for models trained with domain invariance objectives.

\section{Experiment Details and In-distribution Classification Performance}
\paragraph{Software and Hardware.}
Our code is implemented with Python 3.8.0 and PyTorch 1.6.0.  All experiments are run on NVIDIA GeForce RTX 2080Ti.
\paragraph{Experiment Details.}
Following the common setup, the validation set is randomly selected from 20\% of the training set. We perform grid search over learning rate $\gamma \in \{0.0001, 0.005, 0.001, 0.01\}$ and $l_2$ penalties $\lambda \in \{0.0001, 0.001, 0.01, 0.05\}$. We train for 30 epochs with SGD on ResNet-18. For ColorMNIST, we train from scratch while we start training with pre-trained ResNet for Waterbirds and CelebA, as in~\citet{sagawa2019distributionally}.
\paragraph{In-distribution Classification Performance.}
Table~\ref{tab:color_mnist_precision}, Table~\ref{tab:waterbird_precision}, and Table~\ref{tab:celebA_precision} present the in-distribution data classification accuracy for models trained with ERM and other domain invariance learning objectives for different tasks respectively (averaged over 4 runs).
\begin{table}[h]
    \ra{1.2}
    \centering
    \scalebox{0.88}{
    \begin{tabular}{lccc}\toprule
      \textbf{Training Objective}& $\textbf{r=0.25}$ & $\textbf{r=0.35}$ & $\textbf{r=0.45}$ \\
    \midrule
    \textbf{ERM}~\citep{vapnik1992principles}   &  $99.98\pm0.03$ & $99.99\pm0.02$ & $99.97\pm0.03$ \\
    \textbf{IRM}~\citep{arjovsky2019invariant}  & $99.98\pm0.02$ & $100.00\pm0.00$ & $99.99\pm0.02$ \\
    \textbf{GDRO}~\citep{sagawa2019distributionally}  & $99.97\pm0.04$ & $99.98\pm0.03$ & $99.98\pm0.02$ \\
    \textbf{REx}~\citep{krueger2020out}   & $100.00\pm0.00$ & $99.99\pm0.02$ & $99.99\pm0.02$ \\
    \textbf{DANN}~\citep{ganin2016domain}  & $99.97\pm0.02$ & $99.99\pm0.02$ & $99.99\pm0.02$ \\
    \textbf{CDANN}~\citep{li2018deep} & $99.97\pm0.02$ & $99.99\pm0.02$ & $99.98\pm0.02$  \\
    \bottomrule
    \end{tabular}
    }
    \caption{\small In-Distribution data classification accuracy on ColorMNIST.}
    \label{tab:color_mnist_precision}
\end{table}
\begin{table}[H]
    \ra{1.2}
    \centering
    \scalebox{0.88}{
    \begin{tabular}{lcccc}\toprule
    \textbf{Training Objective} & $\textbf{r=0.5}$ & $\textbf{r=0.7}$ & $\textbf{r=0.9}$ \\
    \midrule
    \textbf{ERM}~\citep{vapnik1992principles}   &  $96.93\pm0.05$ & $96.64\pm0.06$ & $94.67\pm0.25$ \\
    \textbf{IRM}~\citep{arjovsky2019invariant}  & $96.48\pm0.24$ & $96.67\pm0.13$ & $94.70\pm0.44$ \\
    \textbf{GDRO}~\citep{sagawa2019distributionally}  & $96.82\pm0.00$ & $96.63\pm0.04$ & $94.55\pm0.12$ \\
    \textbf{REx}~\citep{krueger2020out}   & $97.11\pm0.07$ & $96.67\pm0.14$ & $94.68\pm0.10$ \\
    \textbf{DANN}~\citep{ganin2016domain}  & $96.65\pm0.12$ & $96.08\pm0.08$ & $93.57\pm0.48$\\
    \textbf{CDANN}~\citep{li2018deep} & $96.57\pm0.09$ & $96.19\pm0.13$ & $94.17\pm0.21$ \\
    \bottomrule
    \end{tabular}
    }
    \caption{\small In-Distribution data classification accuracy on Waterbirds~\citep{sagawa2019distributionally}.}
    \label{tab:waterbird_precision}
\end{table}


\begin{table}[h]
    \ra{1.2}
    \centering
    \small
    \scalebox{1}{
    \begin{tabular}{lcc}\toprule
         \textbf{Training Objective}  & \textbf{$\textbf{r=0.8}$} \\\midrule
    \textbf{ERM}~\citep{vapnik1992principles}  & $95.78\pm0.48$ \\
    \textbf{IRM}~\citep{arjovsky2019invariant}   & $95.97\pm0.62$ \\
    \textbf{GDRO}~\citep{sagawa2019distributionally} & $95.74\pm0.54$ \\
    \textbf{REx}~\citep{krueger2020out} & $95.49\pm0.77$ \\
    \textbf{DANN}~\citep{ganin2016domain} & $96.27\pm0.25$ \\
    \textbf{CDANN}~\citep{li2018deep} & $94.74\pm0.63$ \\
    \bottomrule
    \end{tabular}
    }
    \caption{\small In-Distribution data classification accuracy on CelebA~\citep{liu2015faceattributes}. }
    \vspace{-0.4cm}
    \label{tab:celebA_precision}
\end{table}

\end{document}